\newcommand{\inner}[2]{\langle #1, #2 \rangle}
\newcommand{\E}{\mathbb{E}}
\newcommand{\R}{\mathbb{R}}
\newcommand{\N}{\mathbb{N}}
\title{Downlink Compression Improves Top$_K$ Sparsification}
\newtheorem{assumption}{Assumption}[section]
\newtheorem{theorem}{Theorem}[section]
\newtheorem{lemma}[theorem]{Lemma}
\author{%
  William Zou \\
  University of Waterloo \\
  \texttt{wyzou@uwaterloo.ca} \\
  % examples of more authors
  \And
  Hans De Sterck \\
  University of Waterloo \\
  \texttt{hans.desterck@uwaterloo.ca} \\
  \And
  Jun Liu \\
  University of Waterloo \\
  \texttt{j.liu@uwaterloo.ca} \\
  % Address \\
  % \texttt{email} \\
  % \AND
  % Coauthor \\
  % Affiliation \\
  % Address \\
  % \texttt{email} \\
  % \And
  % Coauthor \\
  % Affiliation \\
  % Address \\
  % \texttt{email} \\
  % \And
  % Coauthor \\
  % Affiliation \\
  % Address \\
  % \texttt{email} \\
}
\begin{document}

\maketitle

\begin{abstract}
Training large neural networks is time consuming. To speed up the process, distributed training is often used. One of the largest bottlenecks in distributed training is communicating gradients across different nodes. Different gradient compression techniques have been proposed to alleviate the communication bottleneck, including top$_K$ gradient sparsification, which truncates the gradient to the largest $K$ components before sending it to other nodes. While some authors have investigated top$_K$ gradient sparsification in the parameter-server framework by applying top$_K$ compression in both the worker-to-server (uplink) and server-to-worker (downlink) direction, the currently accepted belief says that adding extra compression degrades the convergence of the model. We demonstrate, on the contrary, that adding downlink compression can potentially improve the performance of top$_K$ sparsification: not only does it reduce the amount of communication per step, but also, counter-intuitively, can improve the upper bound in the convergence analysis. To show this, we revisit non-convex convergence analysis of top$_K$ stochastic gradient descent (SGD) and extend it from the unidirectional to the bidirectional setting. We also remove a restriction of the previous analysis that requires unrealistically large values of $K$. We experimentally evaluate bidirectional top$_K$ SGD against unidirectional top$_K$ SGD and show that  models trained with bidirectional top$_K$ SGD will perform as well as models trained with unidirectional top$_K$ SGD while yielding significant communication benefits for large numbers of workers.
\end{abstract}

\section{Introduction}

Deep neural networks trained on large datasets are known to achieve state-of-the-art performance in accuracy. However, the large size of the models and datasets severely impacts the training time of the model. To decrease the training time, distributed machine training techniques are often used, which in recent years have involved scaling stochastic gradient descent (SGD) to multiple processes \cite{hermans2017scalable, ben2019demystifying}.

The standard way to scale SGD on multiple processes is through data parallelism where the training set is split across $n$ different processes \cite{hermans2017scalable}. Each node will calculate a stochastic gradient from their allocated data independently and in parallel. The nodes then communicate the gradient with each other before updating model parameters. We want to minimize the weighted average of $N$ different functions $(F^q)_{q=1}^{N}: \R^d \to \R$ across $N$ nodes
\begin{equation}
    \min_{w \in \R^d} F(w) \triangleq \sum_{q = 1}^N p_q  F^q(w),
\end{equation}
by applying the iteration
\begin{equation}
\label{distributed_update_step}
    w_t = w_{t-1} - \alpha_t \sum_{q = 1}^N p_q g^q(w_{t-1}, \xi_{t - 1}^q),
\end{equation}
where $w_t$ is the model parameters at iteration $t$, $g^q(\cdot, \cdot)$ is a stochastic gradient of $F$, $\alpha_t$ is the step size at time $t$, and $\xi_{t-1}^q$ represents a local sample, similar to how it is used in \cite{bottou2018optimization}.

Communication between nodes is one of the largest bottleneck in distributed machine learning \cite{xu2020compressed}. Most efforts in increasing the performance speed of distributed learning come from reducing this bottleneck. One of the most well-studied compression technique is sparsification, which focuses on reducing communication between worker nodes by sending only a sparse subset of the gradient \cite{ben2019demystifying, xu2020compressed}. The most popular of these methods is top$_K$ gradient sparsification, which truncates the gradient to the largest $K$ components by magnitude \cite{chen2021distributed,xu2020compressed}. Top$_K$ gradient sparsification was originally only used during the worker-to-server (uplink) communication of the parameter-server framework \cite{sattler2019robust}, since the gradient sent by the server is sparse if the number of participating workers is low.

Due to the increased number of workers used in distributed training, Sattler \emph{et al.} recommended adding top$_K$ gradient sparsification during server-to-worker (downlink) communication as well, and reported that sparsifying the gradients in both uplink and downlink communication (bidirectional) reduces the final accuracy by at most 3\% compared to only using uplink  \cite{sattler2019robust}. Theoretical frameworks from Tang \emph{et al.} and Zheng \emph{et al.} have been developed to analyze the convergence of bidirectionally compressed error-compensated SGD, which can be used to analyze bidirectional top$_K$ gradient sparsification \cite{tang2019doublesqueeze, zheng2019communication}. However, these studies suggest that adding downlink sparsification will degrade the convergence of the model \cite{philippenko2021preserved, tang2019doublesqueeze, zheng2019communication}. We show that under the theoretical framework provided by Alistarh \emph{et al.} \cite{alistarh2018convergence}, we can construct a upper bound for the non-convex convergence of bidirectional top$_K$ SGD that is potentially smaller than the upper bound of unidirectional top$_K$ SGD.

\textbf{Related work}. To the best of our knowledge, top$_K$ sparsification was first proposed by Aji and Heafield \cite{aji2017sparse}. Alistarh \emph{et al.} and Stich \emph{et al.} later analyzed the convergence rate of unidirectional top$_K$ sparsification, and showed that the scheme converges at the same rate as vanilla SGD \cite{alistarh2018convergence, stich2018sparsified}.

Bidirectional top$_K$ SGD has been used in the Federated Learning setting. Sattler \emph{et al.} evaluated communication protocols for Federated Learning, and suggested adding top$_K$ sparsification in the server-to-worker direction \cite{sattler2019robust}. They reported that using bidirectional top$_K$ SGD did not destabilize the training as much as using bidirectional signSGD. 

% Han \emph{et al.} reported an approach that guarantees  \cite{han2020adaptive}.

Beyond applications in Federated Learning, theoretical frameworks to evaluate the convergence rate of bidirectional biased compression techniques with error-feedback framework have been developed in \cite{tang2019doublesqueeze, zheng2019communication}. As far as we know, Tang \emph{et al.} was the first to have developed convergence analysis for bidirectional error feedback SGD compatible with any compression technique, and showed the same convergence rate as SGD under certain assumptions \cite{tang2019doublesqueeze}. Zheng \emph{et al.} developed a similar theoretical framework to Tang \emph{et al.} independently \cite{zheng2019communication}. Since then, there have been theoretical frameworks developed for variations of bidirectional algorithms \cite{liu2020double, philippenko2020bidirectional}. Liu \emph{et al.} developed DORE, which compresses and sends gradient residuals \cite{liu2020double}, and Philippenko and Dieuleveut introduced the Artemis framework to analyze unbiased compressors \cite{philippenko2020bidirectional}. To the best of our knowledge, no bidirectional compression framework has shown that adding downlink compression can improve the convergence bound of the unidirectional compression framework.

\textbf{Contributions}. To summarize, the contributions of our paper are as follows:

\begin{itemize}
    \item \textbf{Showing downlink compression can improve constants in the convergence bound of top$_K$ SGD}: We extend the convergence analysis of Alistarh \emph{et al.} to bidirectional top$_K$ SGD \cite{alistarh2018convergence}, and show that bidirectional top$_K$ has the same convergence rate as unidirectional top$_K$ SGD but with constants that may be smaller.
    
    \item \textbf{Generalization of non-convex analysis of top$_K$ SGD}: We fix a major limitation of the original unidirectional top$_K$ non-convex analysis in \cite{alistarh2018convergence} so that it works for $K < \frac{1}{2} d$, where $d$ is the size of the gradient. Note that $K\ge \frac12 d$ means that one can compress the gradient by a factor of at most 2, which is not practical. 
    
    \item \textbf{Estimate of compression factor achieved by downlink compression}: We provide an empirical estimate of the compression factor achieved by downlink sparsification, and show that top$_K$ SGD can provide significant communication benefits when the number of workers is large.
\end{itemize}

\section{\texorpdfstring{Bidirectional Top$_K$ SGD Algorithm}{Lg}}

We describe top$_K$ SGD and bidirectional top$_K$ SGD in Algorithms \ref{unidirectional_algorithm_client} and \ref{bidirectional_algorithm_client}. For both algorithms, during the $t$-th iteration, each worker $q$ calculates the stochastic gradient $g^q(w_{t-1},\xi_t^q)$, compresses the error-compensated value $\alpha_{t-1} g^q(w_{t-1},\xi_t^q) + \epsilon_{t-1}^q$, updates the local error term with $\epsilon_{t-1}^q = a_t^q - \text{Top}_K(a_t^q)$, then sends the error-compensated value to the server.

The server aggregates all error-compensated values $\sum_{q = 1}^N p_q \text{Top}_K(a_t^q)$. In unidirectional top$_K$ SGD, this values is sent directly back to each individual worker. In bidirectional top$_K$ SGD, the server compresses the error-compensated value $\sum_{q = 1}^N p_q \text{Top}_K(a_t^q) + \delta_{t-1}$, updates the global error term with $\delta_t = g_t - \text{Top}_K(g_t)$, before sending the error-compensated value back to each individual worker, where it will be used to update local models.

The main benefit of bidirectional sparsification is the communication saved in the server-to-worker direction. Note that $\sum_{q = 1}^N p_q \text{Top}_K(a_t^q)$ is a sparse vector when $N \ll D$. We investigate the communication saved in more detail in Section \ref{subsection:communication_saved}.

\begin{minipage}[t]{0.46\textwidth}
\vspace{0pt}
\begin{algorithm}[H]
\label{unidirectional_algorithm_client}
\SetAlgoLined
\KwInput{Stochastic Gradient Oracle $g^q(\cdot; \cdot)$, learning rate sequence $\{ \alpha_t \}_{t \ge 0}$, probability vector $(p_0, \ldots, p_N)$}
 Initialize $w_0 \in \R^d; \epsilon_0^q = 0 \in \R^d$; $t=1$\;
 \While{$t \ge 1$}{
  \textbf{On worker $q$}: \\
  $a_{t}^q {\leftarrow} \epsilon_{t - 1}^q + \alpha_{t - 1} g^q(w_{t - 1}, \xi_{t - 1}^q)$\;
  $\epsilon_{t}^q {\leftarrow} a_{t}^q - \text{Top}_K(a_{t}^q)$\;
  $\text{SEND}(\text{Top}_K(a_{t}^q), \text{server})$\;
  \textbf{On server}: \\
    \For{\text{every worker} $q$}{
        $\text{RECV}(\text{Top}_K(a_{t}^q), q)$\;
    }
    $g_{t} {\leftarrow} \sum_{q = 1}^{N} p_q \text{Top}_K(a_{t}^q)$\;
    \For{\text{every client} $q$}{
        $\text{SEND}(g_{t}, q)$\;
    }
  \textbf{On worker $q$}: \\
  $\text{RECV}(g_t, \text{server})$\;
  $w_{t} {\leftarrow} w_{t - 1} - g_t$\;
  $t = t + 1$\;
 }
 \caption{Unidirectional Top$_{K}$ SGD}
\end{algorithm}
\end{minipage}
\hfill
\begin{minipage}[t]{0.46\textwidth}
\begin{algorithm}[H]
\label{bidirectional_algorithm_client}
\SetAlgoLined
\KwInput{Stochastic Gradient Oracle $g^q(\cdot; \cdot)$, learning rate sequence $\{ \alpha_t \}_{t \ge 0}$, probability vector $(p_0, \ldots, p_N)$}
 Initialize $w_0 \in \R^d; \epsilon_0^q = 0 \in \R^d$; $\delta_0 = 0 \in \R^d$; $t=1$\;
 \While{$t \ge 1$}{
  \textbf{On worker $q$}: \\
  $a_{t}^q {\leftarrow} \epsilon_{t - 1}^q + \alpha_{t - 1} g^q(w_{t - 1}, \xi_{t - 1}^q)$\;
  $\epsilon_{t}^q {\leftarrow} a_{t}^q - \text{Top}_K(a_{t}^q)$\;
  $\text{SEND}(\text{Top}_K(a_{t}^q), \text{server})$\;
  \textbf{On server}: \\
  \For{\text{every worker} $q$}{
    $\text{RECV}(\text{Top}_K(a_{t}^q), q)$\;
  }
  $g_{t} {\leftarrow} \sum_{q = 1}^{N} p_q \text{Top}_K(a_{t}^q) + \delta_{t-1}$\;
  $\delta_{t} {\leftarrow} g_{t} - \text{Top}_K(g_{t})$\;
    \For{\text{every client} $q$}{
     $\text{SEND}(\text{Top}_K(g_{t}), q)$\;
  } 
  \textbf{On worker $q$}: \\
  $\text{RECV}(\text{Top}_K(g_{t}), \text{server})$\;
  $w_{t} {\leftarrow} w_{t - 1} - \text{Top}_K(g_{t})$\;
  $t = t + 1$\;
 }
 \caption{Bidirectional Top$_{K}$ SGD}
\end{algorithm}
\end{minipage}

\section{\texorpdfstring{Convergence Analysis of Bidirectional Top$_K$ SGD}{Lg}}
\label{headings}

In this section, we analyze the convergence of the stochastic sequence $\{w_t\}_{t \ge 0}$ in Algorithm \ref{bidirectional_algorithm_client} in the non-convex setting. We follow the proof structure created by Alistarh \emph{et al} \cite{alistarh2018convergence} for analyzing unidirectional top$_K$ sparsification.

\subsection{Analytic Assumptions}
\label{section:assumptions}
\begin{assumption}[\textbf{Existence of lower bound}]
\label{lower_bound_assumption}
There exists some constant $F^*$ such that $F(w) \ge F^*$ for all $w \in \R^d$.

\end{assumption}

\begin{assumption}[\textbf{Biased compressor guarantee}]

\label{biased_compressor_assumption}
There exists some $\gamma \in (0, 1)$ such that
\normalfont
\begin{align}
\label{uplink_biased}
    \| \text{Top}_K(w) - w \|_2^2 &\le (1 - \gamma) \| w \|_2^2 & \forall w \in \R^d.
\end{align}

\end{assumption}

% We mention that our analysis would not change if $K$ was different for uplink and downlink compression. Assume that  

% \begin{align}
% \label{downlink_biased}
%     \| \text{Top}_{K_1}(w) - w \|_2^2 ] &\le (1 - \gamma_1) \| w \|_2^2 & \forall w \in \R^d,
% \end{align}
% and

% \begin{align}
%     \| \text{Top}_{K_2}(w) - w \|_2^2 &\le (1 - \gamma_2) \| w \|_2^2 & \forall w \in \R^d,
% \end{align}
% then we can choose $\min\{ \gamma_1, \gamma_2 \}$ for the $\gamma$ value in Assumption \ref{biased_compressor_assumption}. Finally,

This assumption is commonly used with biased compression schemes \cite{beznosikov2020biased, karimireddy2019error}. We mention that we can always find a $\gamma$ value to satisfy this assumption for top$_K$, since we know
\begin{equation}
\label{topk_bound}
\| \text{Top}_K(w) - w  \|_2^2 \le \frac{d - K}{d} \| w \|_2^2,
\end{equation}
where $d$ is the dimension of the gradient.

\begin{assumption}[\textbf{Lipschitz continuous gradient}]
\label{L_smooth_assumption}
 $F$ is continuously differentiable and the gradient $\nabla F: \R^d \to \R^d$ is Lipschitz continuous with constant $L > 0$, i.e.,
\begin{align*}
    \| \nabla F (w) - \nabla F(v) \|_2 &\le L \|w - v \|_2  &  \forall w, v \in \R^d.
\end{align*}

\end{assumption}

\begin{assumption}[\textbf{First and second moments}]
\label{first_and_second_moment_assumption}
The objective function and Algorithm \ref{bidirectional_algorithm_client} satisfy the following:
\begin{align}
    \E[\sum_{q = 1}^N p_q g^q(w, \xi_t^q)] &= \sum_{q = 1}^N p_q \nabla F^q (w) & \forall w 
    \in \R^d, \forall t \in \N,
\end{align}
and
%\noindent
% Define
% \begin{equation*}
%     \E[F(w_{t + 1})] = \E_{0}\E_{1}...\E_{t}[F(w_{t + 1})].
% \end{equation*}
% then
\begin{align}
\E[\| \sum_{q=1}^{N} p_q g^q(w, \xi_t^q)\|^2_2] &\le M   & \forall w \in \R^d, \forall t \in \N,
\end{align}
where $\E[\cdot]$ is taken with respect to the joint distribution of $\{ \xi_t^1, \xi_t^2, ..., \xi_t^N \}$.
\end{assumption}

\begin{assumption}
\label{gap_assumption}
 Given a sequence of iterates $\{ w_t \}$, there exists $\rho > 0$ such that
\normalfont
\begin{equation}
\begin{aligned}
\| \text{Top}_K(\delta_{t-1} + \sum_{q = 1}^N p_q a_t^q) -& \text{Top}_K(\delta_{t-1} + \sum_{q=1}^N \text{Top}_K(p_q a_t^q)) \|_2  \nonumber \\
{}\le{}& \rho \| \alpha_{t-1} \sum_{q = 1}^N p_q g^q(w_{t-1}, \xi_{t-1}^q) \|_2 & \forall t \in \N.
\end{aligned}
\end{equation}

\end{assumption}

Assumption \ref{gap_assumption} is similar to the assumption made by Alistarh \emph{et al} \cite{alistarh2018convergence}, which is described in Assumption \ref{gap_assumption_unidirectional}.  The authors in \cite{alistarh2018convergence} explain this as a bound on the variance of the local gradients with respect to the global variance. We discuss this assumption in more detail in our toy example in supplemental material \ref{appendex:toy_example}, and compare it to Assumption \ref{gap_assumption_unidirectional} in our discussion in Section \ref{subsection:convergence_bound}.

\subsection{Convergence Bound}
\label{subsection:convergence_bound}
We present the main theorem for this section.

\begin{theorem}
\label{bidirectional_convergence}
Under Assumptions \ref{lower_bound_assumption}--\ref{gap_assumption}, if a learning rate sequence $\{ \alpha_t \}_{t \ge 0}$ and constant $\lambda \in (0, \frac{1 - \gamma}{\gamma})$ are chosen such that there exists constant $D > 0$ with
\begin{equation}
\label{convergence_condition}
    \frac{1}{1 - \gamma}\sum_{i=1}^t ((1 + \lambda)(1 - \gamma))^i  \frac{\alpha_{t-i}^2}{\alpha_t} \le D,\quad\forall t\ge 1,
\end{equation}
then running Algorithm  \ref{bidirectional_algorithm_client}  for $T$ iterations will give
\begin{equation}
\label{convergence_main}
\begin{split}
 \frac{1}{\sum_{t = 0}^T \alpha_t}\sum_{t = 0}^T \alpha_t \E[\| \nabla F(w_t)\|_2^2] & {}\le{}  \frac{2}{\sum_{t = 0}^T \alpha_t} (F(w_0) - F^*) \\
 &+  \left(L M + \frac{L^2 M D (\sqrt{1 - \gamma} +  \rho)^2}{\lambda}\right) \frac{\sum_{t = 0}^T \alpha_t^2}{\sum_{t = 0}^T \alpha_t}.
\end{split}
\end{equation}
\end{theorem}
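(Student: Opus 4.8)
The plan is to follow the error-feedback template of Alistarh \emph{et al.}: replace $\{w_t\}$ by a virtual sequence that absorbs all accumulated compression error, run the descent lemma on it, and control the error separately. I would let $e_t := \sum_{q=1}^N p_q \epsilon_t^q + \delta_t$ collect the aggregated worker error together with the server error, and set $\tilde{w}_t := w_t - e_t$; since $\epsilon_0^q = 0$ and $\delta_0 = 0$ this gives $e_0 = 0$ and $\tilde{w}_0 = w_0$. Unwinding Algorithm \ref{bidirectional_algorithm_client} gives $\text{Top}_K(a_t^q) = \epsilon_{t-1}^q + \alpha_{t-1} g^q(w_{t-1},\xi_{t-1}^q) - \epsilon_t^q$ and $\text{Top}_K(g_t) = g_t - \delta_t = \sum_q p_q \text{Top}_K(a_t^q) + \delta_{t-1} - \delta_t$, so the model update $w_t = w_{t-1} - \text{Top}_K(g_t)$ rearranges to $\text{Top}_K(g_t) = \alpha_{t-1}\sum_q p_q g^q(w_{t-1},\xi_{t-1}^q) + e_{t-1} - e_t$, hence $\tilde{w}_t = \tilde{w}_{t-1} - \alpha_{t-1}\sum_q p_q g^q(w_{t-1},\xi_{t-1}^q)$. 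Thus the virtual iterates perform clean distributed SGD, the only wrinkle being that the stochastic gradient is taken at $w_{t-1}$ rather than at $\tilde{w}_{t-1}$, and $\|\tilde{w}_{t-1}-w_{t-1}\|_2 = \|e_{t-1}\|_2$.

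I would then apply $L$-smoothness (Assumption \ref{L_smooth_assumption}) to $F(\tilde{w}_t)$, take conditional expectation, and use Assumption \ref{first_and_second_moment_assumption} with $\nabla F = \sum_q p_q \nabla F^q$; bounding the cross term $\inner{\nabla F(\tilde{w}_{t-1})}{\nabla F(w_{t-1})}$ from below by $\tfrac{1}{2}\|\nabla F(w_{t-1})\|_2^2 - \tfrac{L^2}{2}\|e_{t-1}\|_2^2$ (Young's inequality and $L$-smoothness, using $\tilde{w}_{t-1}-w_{t-1} = -e_{t-1}$) yields the one-step estimate $\E[F(\tilde{w}_t)] \le \E[F(\tilde{w}_{t-1})] - \tfrac{\alpha_{t-1}}{2}\E\|\nabla F(w_{t-1})\|_2^2 + \tfrac{\alpha_{t-1} L^2}{2}\E\|e_{t-1}\|_2^2 + \tfrac{LM}{2}\alpha_{t-1}^2$.

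The heart of the proof, and the step I expect to be the main obstacle, is bounding $\E\|e_t\|_2^2$: the coupled worker and server error streams have to be folded into a single scalar recursion. The key identity is $e_t = \delta_{t-1} + \sum_q p_q a_t^q - \text{Top}_K(g_t)$. Writing $u_t := \delta_{t-1} + \sum_q p_q a_t^q = e_{t-1} + \alpha_{t-1}\sum_q p_q g^q(w_{t-1},\xi_{t-1}^q)$ for the ideal pre-compression vector, and using $\sum_q \text{Top}_K(p_q a_t^q) = \sum_q p_q \text{Top}_K(a_t^q)$ (positive scaling commutes with $\text{Top}_K$) so that $\delta_{t-1} + \sum_q \text{Top}_K(p_q a_t^q) = g_t$, Assumption \ref{gap_assumption} reads exactly $\|\text{Top}_K(u_t) - \text{Top}_K(g_t)\|_2 \le \rho\|\alpha_{t-1}\sum_q p_q g^q(w_{t-1},\xi_{t-1}^q)\|_2$. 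Decomposing $e_t = (u_t - \text{Top}_K(u_t)) + (\text{Top}_K(u_t) - \text{Top}_K(g_t))$, then applying the triangle inequality, Assumption \ref{biased_compressor_assumption} in the form $\|\text{Top}_K(v) - v\|_2 \le \sqrt{1-\gamma}\,\|v\|_2$, Assumption \ref{gap_assumption}, and $\|u_t\|_2 \le \|e_{t-1}\|_2 + \|\alpha_{t-1}\sum_q p_q g^q(w_{t-1},\xi_{t-1}^q)\|_2$, gives $\|e_t\|_2 \le \sqrt{1-\gamma}\,\|e_{t-1}\|_2 + (\sqrt{1-\gamma}+\rho)\,\|\alpha_{t-1}\sum_q p_q g^q(w_{t-1},\xi_{t-1}^q)\|_2$. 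Squaring via the weighted Young inequality $(a+b)^2 \le (1+\lambda)a^2 + (1+\tfrac{1}{\lambda})b^2$ produces $\|e_t\|_2^2 \le (1+\lambda)(1-\gamma)\|e_{t-1}\|_2^2 + (1+\tfrac{1}{\lambda})(\sqrt{1-\gamma}+\rho)^2 \alpha_{t-1}^2\|\sum_q p_q g^q(w_{t-1},\xi_{t-1}^q)\|_2^2$, which is precisely where $\lambda$ and the geometric ratio $(1+\lambda)(1-\gamma)$ of \eqref{convergence_condition} arise. Unrolling down to $e_0 = 0$, taking expectations, and using the second-moment bound $M$ gives $\E\|e_t\|_2^2 \le \frac{M(\sqrt{1-\gamma}+\rho)^2}{\lambda}\cdot\frac{1}{1-\gamma}\sum_{i=1}^t ((1+\lambda)(1-\gamma))^i \alpha_{t-i}^2$, which by \eqref{convergence_condition} is at most $\frac{M D (\sqrt{1-\gamma}+\rho)^2}{\lambda}\alpha_t$.

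To finish, I would substitute $\E\|e_{t-1}\|_2^2 \le \frac{M D (\sqrt{1-\gamma}+\rho)^2}{\lambda}\alpha_{t-1}$ into the one-step estimate, sum over the iterations, telescope the $F(\tilde{w})$ terms, bound $F(\tilde{w}_0) = F(w_0)$ from above and the terminal $F(\tilde{w})$ from below by $F^*$ (Assumption \ref{lower_bound_assumption}), collect the $\alpha_{t-1}^2$ coefficients into $LM + \frac{L^2 M D (\sqrt{1-\gamma}+\rho)^2}{\lambda}$, and divide through by $\tfrac{1}{2}\sum_{t=0}^T \alpha_t$ to recover \eqref{convergence_main}. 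Condition \eqref{convergence_condition}, which in particular constrains $\lambda$, is exactly what keeps the geometric sum above summable so that this telescoping closes; apart from this the argument mirrors the unidirectional analysis of Alistarh \emph{et al.}, the one genuinely new ingredient being the single scalar recursion that simultaneously tracks both error streams $\sum_q p_q \epsilon_t^q$ and $\delta_t$.
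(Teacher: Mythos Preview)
Your proposal is correct and matches the paper's proof essentially line for line: the paper defines the same virtual sequence $\tilde w_t = w_t - \sum_q p_q \epsilon_t^q - \delta_t$, proves the same clean-SGD recursion for $\tilde w_t$ (Lemma~\ref{equation:error_corrected_sequence_recursive_relation}), derives the same scalar recursion $\|w_t-\tilde w_t\|_2 \le \sqrt{1-\gamma}\,\|w_{t-1}-\tilde w_{t-1}\|_2 + (\sqrt{1-\gamma}+\rho)\,\|\tilde w_t-\tilde w_{t-1}\|_2$ via exactly your decomposition $e_t = (u_t-\text{Top}_K(u_t)) + (\text{Top}_K(u_t)-\text{Top}_K(g_t))$ (Lemma~\ref{lemma:error_corrected_sequence_gap}), squares with the same weighted Young inequality introducing $\lambda$, unrolls and applies the second-moment bound to obtain Lemma~\ref{lemma:expectation_bound}, and then runs the identical descent-lemma-plus-telescoping argument you describe. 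Your explicit observation that $p_q\,\text{Top}_K(a_t^q)=\text{Top}_K(p_q a_t^q)$ identifies $g_t$ with the right-hand side of Assumption~\ref{gap_assumption} is a detail the paper leaves implicit.
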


In order for the upper bound in (\ref{convergence_main}) to converge to zero we need $\sum_{t = 1}^{\infty} \alpha_t = \infty$ and $\sum_{t = 1}^{\infty} \alpha_t^2 < \infty$. We can choose the stepsize sequence $\alpha_t = \frac{1}{(t +1)^\theta}$, $\frac{1}{2} < \theta \le 1$. Finally, we need to check if the sequence can satisfy (\ref{convergence_condition}).

We can ensure (\ref{convergence_condition}) is bounded by requiring $(1 + \lambda)(1 - \gamma) < 1$. We can always find a $\lambda$ that satisfies the inequality, since $\gamma \in (0, 1)$ from Assumption \ref{biased_compressor_assumption}. It is then easy to see that (\ref{convergence_condition}) is satisfied, since the exponential term dominates the polynomial term. 

\textbf{Remark}. Changing Assumption \ref{gap_assumption} to Assumption \ref{gap_assumption_unidirectional} gives unidirectional top$_K$ SGD the same convergence bound as bidirectional top$_K$ SGD, but with $\rho$ replaced by $\hat \rho$.

\begin{assumption}
\label{gap_assumption_unidirectional}
Given a sequence of iterates $\{ w_t \}$, there exists $\rho > 0$ such that
\normalfont
\begin{align}
\| \text{Top}_K(\sum_{q = 1}^N p_q a_t^q) - \sum_{q=1}^N \text{Top}_K(p_q a_t^q)) \|_2 &\le \hat \rho \| \alpha_{t-1} \sum_{q = 1}^N p_q g^q(w_{t-1}, \xi_{t-1}^q) \|_2 & \forall t \in \N,
 \end{align}
\end{assumption}

Bidirectional top$_K$ SGD has the same convergence bound as unidirectional top$_K$ SGD. If we choose the step size to be $\alpha_t = \frac{1}{(t + 1)^{1/2 + \epsilon}}$, where $\epsilon$ is an arbitrarily small number, the bound in Theorem \ref{bidirectional_convergence} will approach 0 at $O(1 / \sqrt{T})$. We compare the performance of bidirectional to unidirectional top$_K$ SGD in Section \ref{section:experiments}, and estimate the values of the constants in their convergence bound. Intuitively, $\rho < \Tilde{\rho}$ if the unidirectional top$_K$ update step is noisy, since applying downlink top$_K$ compression can potentially dampen noise. We provide a toy example to demonstrate this in supplemental material \ref{appendex:toy_example}.

\subsection{Proof}
\label{subsection:proof}
Similar to other error-compensated SGD analysis \cite{alistarh2018convergence, karimireddy2019error}, we consider an error-corrected sequence
\begin{equation}
\label{equation:error_corrected_sequence}
\Tilde{w}_t = w_t - \sum_{q = 1}^N p_q \epsilon_t^q - \delta_t,
\end{equation}
where $\Tilde{w}_t$ is the model parameter vector after accounting for the error term stored on all workers and server. To get the non-convex convergence bound, we apply the standard proof of SGD to $\Tilde{w}_t$, and show that $\Tilde{w}_t \approx w_t$ and $\nabla F(\Tilde{w}_t) \approx \nabla F(w_t)$. We use the following lemmas to construct the convergence bound.

\begin{lemma}
\label{equation:error_corrected_sequence_recursive_relation}
Let $\{ \Tilde{w}_t \}_{t \ge 0}$ be defined in \eqref{equation:error_corrected_sequence}, and $\{ w_t \}_{t \ge 0}$, $\{ \alpha_t \}_{t \ge 0}$, and $p_i$ for $1 \le i \le N$ be defined in Algorithm \ref{bidirectional_algorithm_client}. Then
\begin{equation}
\begin{split}
    \Tilde{w}_{t} &= \Tilde{w}_{t-1} - \alpha_{t - 1} \sum_{q = 1}^N p_q g^q(w_{t-1}, \xi^q_{t-1}).
\end{split}
\end{equation}
\end{lemma}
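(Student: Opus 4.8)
This identity has no analytic content; the plan is simply to substitute the update rules of Algorithm~\ref{bidirectional_algorithm_client} into the definition \eqref{equation:error_corrected_sequence} of $\Tilde w_t$ and watch the two $\text{Top}_K$ operations cancel in pairs. I would carry this out in three substitutions, each triggering one cancellation.

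First, expand using the model update $w_t = w_{t-1} - \text{Top}_K(g_t)$, the worker error update $\epsilon_t^q = a_t^q - \text{Top}_K(a_t^q)$, and the server error update $\delta_t = g_t - \text{Top}_K(g_t)$:
\begin{align*}
\Tilde w_t &= w_t - \sum_{q=1}^N p_q \epsilon_t^q - \delta_t \\
&= w_{t-1} - \text{Top}_K(g_t) - \sum_{q=1}^N p_q\bigl(a_t^q - \text{Top}_K(a_t^q)\bigr) - \bigl(g_t - \text{Top}_K(g_t)\bigr) \\
&= w_{t-1} - \sum_{q=1}^N p_q a_t^q + \sum_{q=1}^N p_q \text{Top}_K(a_t^q) - g_t,
\end{align*}
the downlink compression $\text{Top}_K(g_t)$ having cancelled between the model update and $\delta_t$. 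Second, substitute the server aggregation rule $g_t = \sum_{q=1}^N p_q \text{Top}_K(a_t^q) + \delta_{t-1}$, which cancels the remaining $\sum_q p_q \text{Top}_K(a_t^q)$ term and yields $\Tilde w_t = w_{t-1} - \sum_{q=1}^N p_q a_t^q - \delta_{t-1}$. Third, substitute the worker accumulation $a_t^q = \epsilon_{t-1}^q + \alpha_{t-1} g^q(w_{t-1},\xi_{t-1}^q)$ and regroup the error terms: $\Tilde w_t = \bigl(w_{t-1} - \sum_q p_q \epsilon_{t-1}^q - \delta_{t-1}\bigr) - \alpha_{t-1}\sum_q p_q g^q(w_{t-1},\xi_{t-1}^q)$, and recognizing the parenthesized quantity as $\Tilde w_{t-1}$ closes the argument. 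For $t = 1$ one uses the initializations $\epsilon_0^q = 0$ and $\delta_0 = 0$, so the same chain applies verbatim.

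The only thing requiring care---and the closest thing to an obstacle here---is keeping the time indices straight: the stale global error $\delta_{t-1}$ enters $g_t$, while the freshly compressed vector $\text{Top}_K(g_t)$ appears both in the model update and in the new error $\delta_t$, so one must line these up exactly as written in Algorithm~\ref{bidirectional_algorithm_client} for the cancellations to close. It is worth emphasizing the payoff, since the rest of the proof relies on it: the right-hand side of the lemma contains no compression operator at all, so $\{\Tilde w_t\}$ evolves exactly like ordinary distributed SGD on $F$. This is what will let me apply the standard $L$-smooth descent inequality to $\{\Tilde w_t\}$, together with bounds controlling $\Tilde w_t - w_t = -\sum_q p_q \epsilon_t^q - \delta_t$ via the accumulated compression errors, in the remainder of the proof of Theorem~\ref{bidirectional_convergence}.
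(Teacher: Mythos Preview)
Your proposal is correct and follows essentially the same approach as the paper: both proofs are pure algebraic substitution of the update rules of Algorithm~\ref{bidirectional_algorithm_client} into the definition of $\Tilde w_t$, arriving at $\Tilde w_t = w_{t-1} - \delta_{t-1} - \sum_q p_q a_t^q$ and then using $a_t^q = \epsilon_{t-1}^q + \alpha_{t-1} g^q(w_{t-1},\xi_{t-1}^q)$ to recover $\Tilde w_{t-1}$ plus the gradient term. The only cosmetic difference is that the paper collapses $w_t - \delta_t$ in one step before unpacking the worker errors, whereas you expand all three updates simultaneously and cancel in stages; the content is identical.
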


\begin{proof}
We have
\begin{equation}
\begin{split}
    \Tilde{w}_{t} &= w_{t} - \delta_{t} - \sum_{q = 1}^N p_q \epsilon_{t}^q \\
        &= w_{t - 1} - \delta_{t - 1} - \sum_{q=1}^N p_q a_t^q \\
        &= \Tilde{w}_{t- 1} + \sum_{q = 1}^N p_q \epsilon^q_{t-1} + \delta_{t - 1} - \delta_{t - 1} - \sum_{q=1}^N p_q a_t^q\\
        & = \Tilde{w}_{t- 1} - \alpha_{t - 1} \sum_{q = 1}^N p_q g^q(w_{t - 1}, \xi^q_{t-1}).
\end{split}
\end{equation}
\end{proof}
We use the previous lemma and Assumptions \ref{biased_compressor_assumption} and \ref{gap_assumption} to bound the difference between the error-corrected sequence $\{ \Tilde{w}_t \}$ and $\{ w_t \}$.
\begin{lemma}
\label{lemma:error_corrected_sequence_gap}
Let $\{ w_t \}_{t \ge 0}$ be defined by Algorithm $\ref{bidirectional_algorithm_client}$, and $\{ \Tilde{w}_t \}_{t \ge 0}$ be defined by (\ref{equation:error_corrected_sequence}). Under Assumptions \ref{biased_compressor_assumption} and \ref{gap_assumption}, we have
\begin{equation}
\begin{split}
    \| w_{t} - \Tilde{w}_{t}\|_2 &\le \frac{1}{\lambda}(\sqrt{1 - \gamma} + \rho)^2 \frac{1}{1 - \gamma}\sum_{i=1}^t ((1 + \lambda)(1 - \gamma))^i  \| \Tilde{w}_{t - i + 1} - \Tilde{w}_{t - i} \|^2_2,
\end{split}
\end{equation}
where one can choose the constant $\lambda \in (0, \frac{\gamma}{1 - \gamma})$.
\end{lemma}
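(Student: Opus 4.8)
The plan is to track the aggregate residual $e_t := w_t - \Tilde{w}_t$, which by the definition \eqref{equation:error_corrected_sequence} equals $\sum_{q=1}^N p_q \epsilon_t^q + \delta_t$ and which satisfies $e_0 = 0$ since $\epsilon_0^q = 0$ and $\delta_0 = 0$. First I would substitute the update rules of Algorithm~\ref{bidirectional_algorithm_client} into $e_t$: writing $\epsilon_t^q = a_t^q - \text{Top}_K(a_t^q)$, $\delta_t = g_t - \text{Top}_K(g_t)$, and $g_t = \sum_{q=1}^N p_q\,\text{Top}_K(a_t^q) + \delta_{t-1}$, the two copies of $\sum_{q=1}^N p_q\,\text{Top}_K(a_t^q)$ cancel and leave
\begin{equation}
e_t = \delta_{t-1} + \sum_{q=1}^N p_q a_t^q - \text{Top}_K\!\Bigl(\delta_{t-1} + \sum_{q=1}^N p_q\,\text{Top}_K(a_t^q)\Bigr).
\end{equation}
Because $a_t^q = \epsilon_{t-1}^q + \alpha_{t-1} g^q(w_{t-1},\xi_{t-1}^q)$, the first two terms telescope: $\delta_{t-1} + \sum_{q=1}^N p_q a_t^q = e_{t-1} + \alpha_{t-1}\sum_{q=1}^N p_q g^q(w_{t-1},\xi_{t-1}^q)$, and Lemma~\ref{equation:error_corrected_sequence_recursive_relation} identifies $\alpha_{t-1}\sum_{q=1}^N p_q g^q(w_{t-1},\xi_{t-1}^q) = \Tilde{w}_{t-1} - \Tilde{w}_t =: G_{t-1}$, so that $\|G_{t-1}\|_2 = \|\Tilde{w}_t - \Tilde{w}_{t-1}\|_2$.

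Next I would add and subtract $\text{Top}_K(e_{t-1}+G_{t-1})$ (noting $e_{t-1}+G_{t-1} = \delta_{t-1} + \sum_{q=1}^N p_q a_t^q$) to split $e_t = (\mathrm{I}) + (\mathrm{II})$, with $(\mathrm{I}) = (e_{t-1}+G_{t-1}) - \text{Top}_K(e_{t-1}+G_{t-1})$ and $(\mathrm{II}) = \text{Top}_K(e_{t-1}+G_{t-1}) - \text{Top}_K(\delta_{t-1}+\sum_{q=1}^N p_q\,\text{Top}_K(a_t^q))$. Term $(\mathrm{I})$ is exactly the $\text{Top}_K$ compression error of $e_{t-1}+G_{t-1}$, so Assumption~\ref{biased_compressor_assumption} gives $\|(\mathrm{I})\|_2 \le \sqrt{1-\gamma}\,\|e_{t-1}+G_{t-1}\|_2$. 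For $(\mathrm{II})$, positive homogeneity of $\text{Top}_K$ rewrites $p_q\,\text{Top}_K(a_t^q)$ as $\text{Top}_K(p_q a_t^q)$, so $(\mathrm{II})$ is precisely the vector whose norm Assumption~\ref{gap_assumption} bounds, namely $\|(\mathrm{II})\|_2 \le \rho\,\|\alpha_{t-1}\sum_{q=1}^N p_q g^q(w_{t-1},\xi_{t-1}^q)\|_2 = \rho\,\|G_{t-1}\|_2$. Two triangle inequalities then yield $\|e_t\|_2 \le \sqrt{1-\gamma}\,\|e_{t-1}\|_2 + (\sqrt{1-\gamma}+\rho)\,\|G_{t-1}\|_2$.

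Finally, I would square this and apply Young's inequality $(x+y)^2 \le (1+\lambda)x^2 + (1+\tfrac{1}{\lambda})y^2$ to obtain the recursion
\begin{equation}
\|e_t\|_2^2 \le (1+\lambda)(1-\gamma)\,\|e_{t-1}\|_2^2 + \Bigl(1+\tfrac{1}{\lambda}\Bigr)(\sqrt{1-\gamma}+\rho)^2\,\|G_{t-1}\|_2^2 .
\end{equation}
Unrolling this down to $e_0 = 0$, re-indexing via $\|G_{t-i}\|_2^2 = \|\Tilde{w}_{t-i+1}-\Tilde{w}_{t-i}\|_2^2$, and pulling out one factor of $(1+\lambda)(1-\gamma)$ so that the exponent is $i$ rather than $i-1$, the prefactor collapses to $\tfrac{1+1/\lambda}{1+\lambda}\cdot\tfrac{(\sqrt{1-\gamma}+\rho)^2}{1-\gamma} = \tfrac{1}{\lambda}(\sqrt{1-\gamma}+\rho)^2\tfrac{1}{1-\gamma}$, which is exactly the claimed bound on $\|w_t-\Tilde{w}_t\|_2^2$. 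The derivation is valid for any $\lambda > 0$; the stated range $\lambda \in (0,\tfrac{\gamma}{1-\gamma})$ is included only because it is equivalent to $(1+\lambda)(1-\gamma) < 1$, which Theorem~\ref{bidirectional_convergence} later needs to sum the resulting geometric series.

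The main obstacle I anticipate is the first step: carefully propagating the definition $e_t = \sum_{q=1}^N p_q\epsilon_t^q + \delta_t$ through both the uplink update ($\epsilon_t^q$) and the downlink update ($\delta_t$, $g_t$, $w_t$) so that the worker-compressed terms cancel, and then recognizing the leftover difference of two $\text{Top}_K$'s as the exact left-hand side of Assumption~\ref{gap_assumption} after the homogeneity rewrite. Once the one-step bound $\|e_t\|_2 \le \sqrt{1-\gamma}\,\|e_{t-1}\|_2 + (\sqrt{1-\gamma}+\rho)\,\|G_{t-1}\|_2$ is in hand, the remaining steps (Young, unrolling, constant bookkeeping) are routine.
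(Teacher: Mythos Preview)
Your proposal is correct and follows essentially the same route as the paper: expand $e_t = w_t - \tilde w_t$ via the algorithm's updates, add and subtract $\text{Top}_K(\delta_{t-1}+\sum_q p_q a_t^q)$ to isolate a compression-error term (bounded by Assumption~\ref{biased_compressor_assumption}) and a distributed-error term (bounded by Assumption~\ref{gap_assumption}), apply the triangle inequality to reach $\|e_t\|_2 \le \sqrt{1-\gamma}\,\|e_{t-1}\|_2 + (\sqrt{1-\gamma}+\rho)\,\|\tilde w_t-\tilde w_{t-1}\|_2$, then square with Young's inequality and unroll. Your explicit mention of the positive homogeneity $p_q\,\text{Top}_K(a_t^q)=\text{Top}_K(p_q a_t^q)$ to match the form of Assumption~\ref{gap_assumption} is a detail the paper leaves implicit, and your observation that the final bound is on $\|w_t-\tilde w_t\|_2^2$ (the lemma statement omits the square on the left-hand side) is also correct.
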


\begin{proof} 

Applying Lemma \ref{equation:error_corrected_sequence_recursive_relation} and the iterative relation of sequence $\{ w_t \}_{t \ge 0}$ defined in Algorithms \ref{bidirectional_algorithm_client}, we get: 
\begin{align}
\begin{split}
    \| w_{t} - \Tilde{w}_{t}\|_2 {}={} &  \| w_{t-1} - \Tilde{w}_{t-1} + \alpha_{t - 1} \sum_{q = 1}^N p_q g^q(w_{t-1}, \xi_{t-1}^q) - \\
    &\text{Top}_K(\sum_{q=1}^N p_q \text{Top}_K(a_{t}^q) + \delta_{t-1}) \|_2 \\
    {}={} & \| \delta_{t - 1} + \sum_{q = 1}^N p_q \epsilon_{t-1}^q  + \alpha_{t - 1} \sum_{q = 1}^N p_q g^q(w_{t-1}, \xi_{t-1}^q) - \\
    & \text{Top}_K(\sum_{q=1}^N p_q \text{Top}_K(a_{t}^q)  + \delta_{t-1}) \|_2 \\
    {}\le{} &  \| \delta_{t - 1} + \sum_{q = 1}^N p_q a_{t}^q - \text{Top}_K(\delta_{t - 1} + \sum_{q = 1}^N p_q a_{t}^q)\|_2  + \\
    & \| \text{Top}_K(\delta_{t - 1} + \sum_{q = 1}^N p_q a_{t}^q) -\text{Top}_K(\delta_{t - 1} + \sum_{q=1}^N \text{Top}_K(p_q a_t^q)) \|_2.  \\
\end{split}
\end{align}
Using Assumption \ref{biased_compressor_assumption} and \ref{gap_assumption}, we get
\begin{align}
\label{equation:gap_without_square}
\begin{split}
    \| w_{t} - \Tilde{w}_{t}\|_2 {}\le{} & \sqrt{1 - \gamma} \|\delta_{t - 1} + \sum_{q = 1}^N \ p_q a_t^q \|_2 + \rho \| \alpha_{t-1} \sum_{q=1}^N p_q g(w_{t - 1}, \xi_{t - 1}^q) \|_2  \\ 
    {}\le{} & \sqrt{1 - \gamma} \|\delta_{t - 1} + \sum_{q=1}^N p_q \epsilon_{t - 1}^q \|_2 + \\
    &(\sqrt{1 - \gamma} + \rho) \| \alpha_{t - 1} \sum_{q=1}^N p_q g(w_{t - 1}, \xi_{t - 1}^q) \|_2 \\
    {}\le{} & \sqrt{1 - \gamma} \| w_{t - 1} - \Tilde{w}_{t - 1} \|_2 + (\sqrt{1 - \gamma} + \rho) \| \Tilde{w}_{t} - \Tilde{w}_{t - 1} \|_2,
\end{split}
\end{align}
where the final inequality in (\ref{equation:gap_without_square}) is from Lemma \ref{equation:error_corrected_sequence_recursive_relation} and the definition of the error-corrected sequence $\Tilde{w}_t$ given in (\ref{equation:error_corrected_sequence}). Taking the square, we get
\begin{equation}
\label{equation:gap}
\begin{split} 
    \|  w_{t} - \Tilde{w}_{t} \|^2_2 {}\le{}& (\sqrt{1 - \gamma} \| w_{t - 1} - \Tilde{w}_{t - 1} \|_2 + (\sqrt{1 - \gamma} + \rho) \| \Tilde{w}_{t} - \Tilde{w}_{t - 1} \|_2)^2 \\
    {}\le{}& (1 + \lambda)(1 - \gamma) \| w_{t - 1} - \Tilde{w}_{t - 1} \|^2_2 + \\
    & (1 + \frac1\lambda) (\sqrt{1 - \gamma} + \rho)^2 \| \Tilde{w}_{t} - \Tilde{w}_{t - 1} \|^2_2, \\
\end{split}
\end{equation}
which holds for any $\lambda > 0$. For reasons that will become clear later, we choose $\lambda \in (0, \frac{\gamma}{1 - \gamma})$. Iterating downwards on (\ref{equation:gap}) gives
\begin{equation}
\begin{split}
    \| \Tilde{w}_t - w_t \|^2_2 &\le \sum_{i=1}^t ((1 + \lambda)(1 - \gamma))^{i-1} (1 + \frac{1}{\lambda}) (\sqrt{1 - \gamma} + \rho)^2 \| \Tilde{w}_{t - i + 1} - \Tilde{w}_{t - i} \|^2_2 \\
    &= \frac{1}{\lambda}(\sqrt{1 - \gamma} + \rho)^2 \frac{1}{1 - \gamma}\sum_{i=1}^t ((1 + \lambda) (1 - \gamma))^i  \| \Tilde{w}
    _{t - i + 1} - \Tilde{w}_{t - i} \|^2_2 \\
\end{split}
\end{equation}
\end{proof}

Using Assumption \ref{first_and_second_moment_assumption}, we can bound the expectation of the difference between the error-corrected sequence and the true sequence from Lemma \ref{lemma:error_corrected_sequence_gap} with a constant.

\begin{lemma}
\label{lemma:expectation_bound}
Under the same setting as Lemma \ref{lemma:error_corrected_sequence_gap} as well as Assumption \ref{first_and_second_moment_assumption}, assume that 
\begin{equation*}
    \frac{1}{1-\gamma}\sum_{i=1}^t ((1 + \lambda)(1 - \gamma))^i  \frac{\alpha_{t-i}^2}{\alpha_t} \le D,
\end{equation*}
for some constant $\lambda > 0$ and $D > 0$. Then
\begin{equation*}
\begin{split}
    \E [ \| w_{t} - \Tilde{w}_{t}\|_2^2 ] &\le \frac{M}{\lambda} (\sqrt{1 - \gamma} + \rho)^2 \alpha_t D.
\end{split}
\end{equation*}
\end{lemma}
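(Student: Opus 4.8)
The plan is to combine the pathwise bound from Lemma~\ref{lemma:error_corrected_sequence_gap} with the increment identity of Lemma~\ref{equation:error_corrected_sequence_recursive_relation}, and then take expectations using the uniform second-moment bound in Assumption~\ref{first_and_second_moment_assumption}. Concretely, Lemma~\ref{lemma:error_corrected_sequence_gap} gives, for every sample path,
\[
\| w_t - \Tilde{w}_t \|_2^2 \le \frac{1}{\lambda}(\sqrt{1-\gamma}+\rho)^2 \frac{1}{1-\gamma}\sum_{i=1}^t ((1+\lambda)(1-\gamma))^i \| \Tilde{w}_{t-i+1} - \Tilde{w}_{t-i} \|_2^2 .
\]
By Lemma~\ref{equation:error_corrected_sequence_recursive_relation}, each increment satisfies $\Tilde{w}_{t-i+1} - \Tilde{w}_{t-i} = -\alpha_{t-i}\sum_{q=1}^N p_q g^q(w_{t-i},\xi_{t-i}^q)$, so $\|\Tilde{w}_{t-i+1} - \Tilde{w}_{t-i}\|_2^2 = \alpha_{t-i}^2 \,\|\sum_{q=1}^N p_q g^q(w_{t-i},\xi_{t-i}^q)\|_2^2$.

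Next I would substitute this into the displayed bound and use linearity of expectation on the finite sum over $i$, obtaining
\[
\E\big[\| w_t - \Tilde{w}_t\|_2^2\big] \le \frac{1}{\lambda}(\sqrt{1-\gamma}+\rho)^2 \frac{1}{1-\gamma}\sum_{i=1}^t ((1+\lambda)(1-\gamma))^i \alpha_{t-i}^2\; \E\Big[\Big\|\sum_{q=1}^N p_q g^q(w_{t-i},\xi_{t-i}^q)\Big\|_2^2\Big].
\]
The one point requiring care is bounding the inner expectation, since $w_{t-i}$ is itself random. Here I would invoke the tower property: conditioning on $w_{t-i}$ and applying Assumption~\ref{first_and_second_moment_assumption}, which holds uniformly over all $w\in\R^d$ and all time indices, yields $\E[\|\sum_q p_q g^q(w_{t-i},\xi_{t-i}^q)\|_2^2 \mid w_{t-i}] \le M$, hence the unconditional expectation is also at most $M$.

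Finally I would factor $\alpha_t$ out of the weighted sum and apply the standing hypothesis of the lemma: $\frac{1}{1-\gamma}\sum_{i=1}^t ((1+\lambda)(1-\gamma))^i \alpha_{t-i}^2 = \alpha_t \cdot \frac{1}{1-\gamma}\sum_{i=1}^t ((1+\lambda)(1-\gamma))^i \frac{\alpha_{t-i}^2}{\alpha_t} \le \alpha_t D$. Putting the pieces together gives $\E[\| w_t - \Tilde{w}_t\|_2^2] \le \frac{M}{\lambda}(\sqrt{1-\gamma}+\rho)^2 \alpha_t D$, which is exactly the claim. I expect the only genuine (and still mild) obstacle to be the conditioning argument in the previous step; the remainder is direct substitution together with the explicitly assumed bound on the weighted sum of squared step sizes.
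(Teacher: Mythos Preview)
Your proposal is correct and is exactly the argument the paper has in mind: the paper does not write out a proof of this lemma but only states that it follows by taking expectations in Lemma~\ref{lemma:error_corrected_sequence_gap} and using the second-moment bound of Assumption~\ref{first_and_second_moment_assumption}, which is precisely the substitution, tower-property, and factor-out-$\alpha_t$ computation you describe. Your reading of Lemma~\ref{lemma:error_corrected_sequence_gap} as a bound on $\|w_t-\Tilde{w}_t\|_2^2$ (rather than the unsquared norm in its displayed statement) is also the right one, as is clear from the final line of that lemma's proof.
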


We can now prove Theorem \ref{bidirectional_convergence}.

\begin{proof}

Denote $\E_{\xi_t}[\cdot]$ to be the expectation taken with respect to the joint distribution of $\{ \xi_t^1, \xi_t^2 ..., \xi_t^N \}$ given all random variables before time $t$. Starting with Assumption \ref{L_smooth_assumption} and taking $\E_{\xi_t}[\cdot]$ on both sides of the inequality, we get
\begin{equation}
\begin{split}
    \E_{\xi_t}[F(\Tilde{w}_{t + 1})] {}\le{}& F(\Tilde{w}_{t}) + \inner{\nabla F(\Tilde{w}_{t})}{\E_{\xi_t}[\Tilde{w}_{t + 1} - \Tilde{w}_{t}]} + \frac{L}{2} \E_{\xi_t}[\| \Tilde{w}_{t + 1} - \Tilde{w}_{t}\|_2^2] \\
    {}\le{}& F(\Tilde{w}_t) - \alpha_t \inner{\nabla F(\Tilde{w}_t)}{\nabla F(w_t)} + \frac{L {\alpha_t}^2 M}{2}\\
    {}\le {}&F(\Tilde{w}_t) - \alpha_t \inner{\nabla F(w_t)}{\nabla F(w_t)} +  \frac{L {\alpha_t}^2 M}{2} + \\
    & \alpha_t \inner{\nabla F(w_t) - \nabla F(\Tilde{w}_t)}{\nabla F(w_t)} \\
    {}\le{}& F(\Tilde{w}_t) - \alpha_t \| \nabla F(w_t) \|_2^2 +  \frac{L {\alpha_t}^2 M}{2}   + \\
    & \frac{\alpha_t}{2} \| \nabla F(w_t) \|_2^2
     + \frac{\alpha_t}{2} \| \nabla F(w_t) - \nabla F(\Tilde{w}_t) \|_2^2 \\
    {}\le{}& F(\Tilde{w}_t) -  \frac{\alpha_t}{2}\| \nabla F(w_t) \|_2^2 +  \frac{L {\alpha_t}^2 M}{2}  + \frac{\alpha_t L^2}{2} \| w_t - \Tilde{w}_t \|_2^2. \\
\end{split}
\end{equation}
Taking expectation with respect to the joint distribution of all random variables on both sides of the inequality,and applying Lemma \ref{lemma:expectation_bound}, we get
\begin{equation*}
\begin{split}
\E[F(\Tilde{w}_{t+1})] &\le \E[F(\Tilde{w}_t)] -  \frac{\alpha_t}{2} \E[ \| \nabla F(w_t) \|_2^2] + \frac{L \alpha_t^2 M}{2} + \frac{\alpha_t^2 L^2}{2} \frac{M}{\lambda} (\sqrt{1 - \gamma} + \rho)^2 D.
\end{split}
\end{equation*}
Telescoping and rearranging, we get
\begin{equation*}
\begin{split}
 \sum_{t = 0}^T \alpha_t \E [\| \nabla F(w_t)\|_2^2] {}\le{}&  \sum_{t = 0}^T 2 (\E[F(\Tilde{w}_{t})] - \E[F(\Tilde{w}_{t+1})]) + L \alpha_t^2 M+ \\
 & \frac{\alpha_t^2 L^2 M D (\sqrt{1 - \gamma} +  \rho)^2}{\lambda},
\end{split}
\end{equation*}
or 
\begin{equation}
\begin{split}
 \frac{1}{\sum_{t = 0}^T \alpha_t}\sum_{t = 0}^T \alpha_t \E[\| \nabla F(w_t)\|_2^2] & {}\le{}  \frac{2}{\sum_{t = 0}^T \alpha_t} (F(w_0) - F^*) \\
 &+  \left(L M + \frac{L^2 M D (\sqrt{1 - \gamma} +  \rho)^2}{\lambda}\right) \frac{\sum_{t = 0}^T \alpha_t^2}{\sum_{t = 0}^T \alpha_t},
\end{split}
\end{equation}
as desired.
\end{proof}

% All tables must be centered, neat, clean and legible.  The table number and
% title always appear before the table.  See Table~\ref{sample-table}.

% Note that publication-quality tables \emph{do not contain vertical rules.} We
% strongly suggest the use of the \verb+booktabs+ package, which allows for
% typesetting high-quality, professional tables:
% \begin{center}
%   \url{https://www.ctan.org/pkg/booktabs}
% \end{center}
% This package was used to typeset Table~\ref{sample-table}.

% \begin{table}
%   \caption{Sample table title}
%   \label{sample-table}
%   \centering
%   \begin{tabular}{lll}
%     \toprule
%     \multicolumn{2}{c}{Part}                   \\
%     \cmidrule(r){1-2}
%     Name     & Description     & Size ($\mu$m) \\
%     \midrule
%     Dendrite & Input terminal  & $\sim$100     \\
%     Axon     & Output terminal & $\sim$10      \\
%     Soma     & Cell body       & up to $10^6$  \\
%     \bottomrule
%   \end{tabular}
% \end{table}

% Specifically, we use the author's method of assigning data to workers as well as their general idea behind synchronizing parameters across local models \cite{AshwinRJ}.
\section{Experiments}
\label{section:experiments}

The simulation code used to evaluate bidirectional top$_K$ SGD against unidirectional top$_K$ SGD is built on \cite{AshwinRJ}, which is distributed under the MIT license. All the code and models can be found in our github repository\footnote{https://github.com/wyxzou/Federated-Learning-PyTorch}. Experiments are run on the MNIST \cite{deng2012mnist} and the Fashion-MNIST \cite{xiao2017/online} datasets using multilayer perceptrons and convolution neural networks on 20, 50 and 100 workers, as well as on the CIFAR10 dataset \cite{krizhevsky2009learning} using the VGG19 network \cite{simonyan2014very} on 20 workers. For all models we choose $K = d - \left \lfloor{(1 - 0.001) d}\right \rfloor$, where $d$ is the number of parameters in the model. The $K$ value for the uplink and downlink compressor are the same for bidirectional top$_K$ SGD.

{\it MLP and CNN network on MNIST dataset}: We train a $[784, 64, 10]$ MLP model from \cite{AshwinRJ} and a CNN model with 2 convolution layers and 1 maxpooling layer from \cite{PyEx}. The MLP and CNN model has 50890 and 1199882 parameters respectively. The MNIST dataset contains 60000 train and 10000 $28 \times 28$ test images. We randomly split the 60000 elements of the training set into equal size sets and assign each to a worker. Each minibatch size is set to 10 for all models regardless of number of workers participating, and the models are trained for 100 epochs with SGD, unidirectional top$_K$ SGD, and bidirectional top$_K$ SGD. The learning rates for SGD, unidirectional top$_K$ SGD, and bidirectional top$_K$ SGD are tuned separately from 0.01 to 0.25 with step increase of 0.01, unlike previous bidirectional compression experiments, which use the same learning rate \cite{sattler2019robust, tang2019doublesqueeze}. We include the optimum learning rate of the models in Table \ref{tab:learning_rate}.

{\it MLP and CNN network on Fashion-MNIST dataset}: We also train a CNN network with 2 convolution layers and 2 max-pooling layer from \cite{AshwinRJ}, and a $[784, 256, 128, 64, 10]$ MLP network. The MLP network has 242762 parameters and the CNN network has 29034 parameters. Similar to the MNIST dataset, the Fashion-MNIST dataset contains 60000 train and 10000 $28 \times 28$ test images. We train the models using the same setup as our MNIST models, and include the optimum learning rate in Table \ref{tab:learning_rate}.
 
{\it VGG19 on CIFAR10 dataset}: Finally, we trained a VGG19 network from \cite{LiuKuang} with 20040522 parameters on the CIFAR10  dataset with 20 workers. The model is trained with batch size 100 for 200 epochs. We tune the learning rate of the VGG19 network trained with SGD on learning rates 0.01, 0.02, 0.05 and 0.1, and use this learning rate for our VGG network trained with unidirectional and bidirectional top$_K$ SGD.

We include the results for training loss and testing accuracy in Figures \ref{fig:accuracy_main} and \ref{fig:loss_main}. A more comprehensive set of test results for training loss and testing accuracy is in supplemental material  \ref{appendix:testing_accuracy_training_loss}. The results show that bidirectional top$_K$ SGD and unidirectional top$_K$ SGD will achieve similar test accuracy and training loss in the same number of epochs, consistent with our theoretical results that bidirectional and unidirectional top$_K$ SGD have similar convergence rate.

\begin{figure}[t]
\begin{minipage}{.30\textwidth}
    \centering
    \includegraphics[width=1\textwidth]{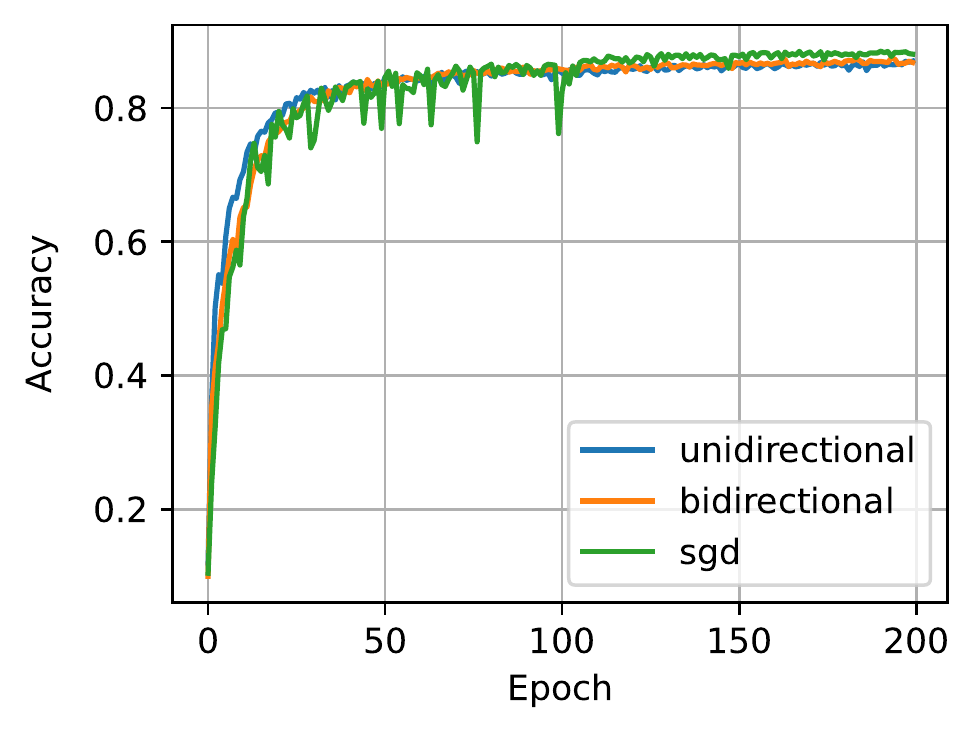}
    \caption*{VGG19 Model Trained on CIFAR10 on 20 Workers}
    \label{fig:cifar_vgg_accuracy}
\end{minipage}
\hfill
\begin{minipage}{.30\textwidth}
    \centering
    \includegraphics[width=1\textwidth]{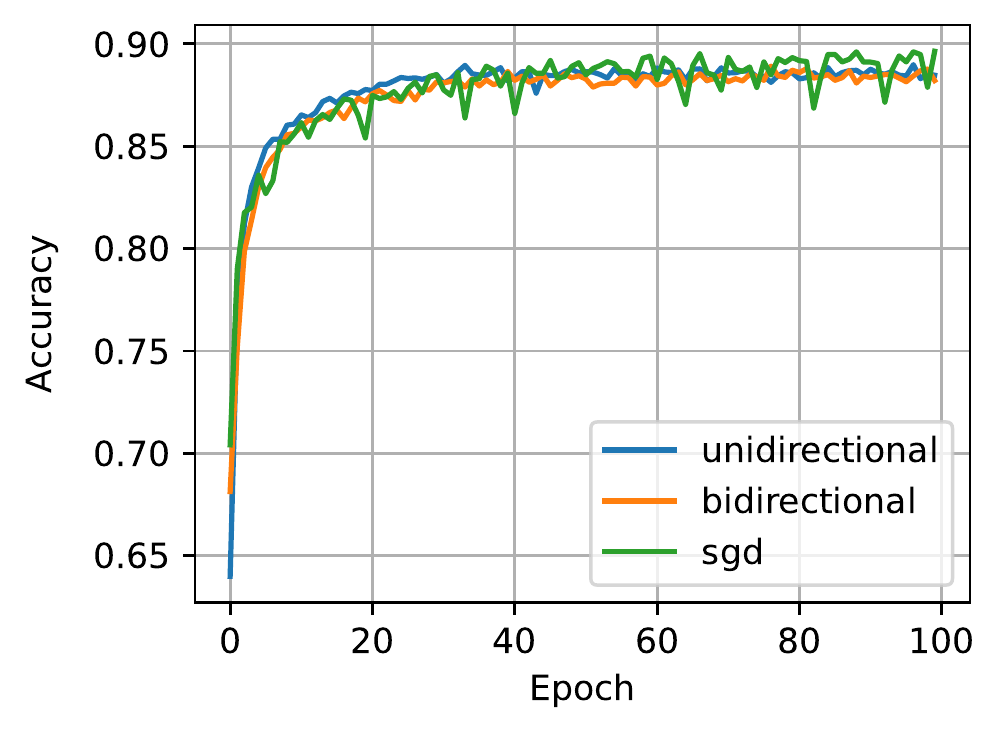}
    \caption*{MLP Model Trained on Fashion MNIST on 50 Workers}
    \label{fig:fmnist_cnn_accuracy}
\end{minipage}%
\hfill
\begin{minipage}{.30\textwidth}
    \centering
    \includegraphics[width=1\textwidth]{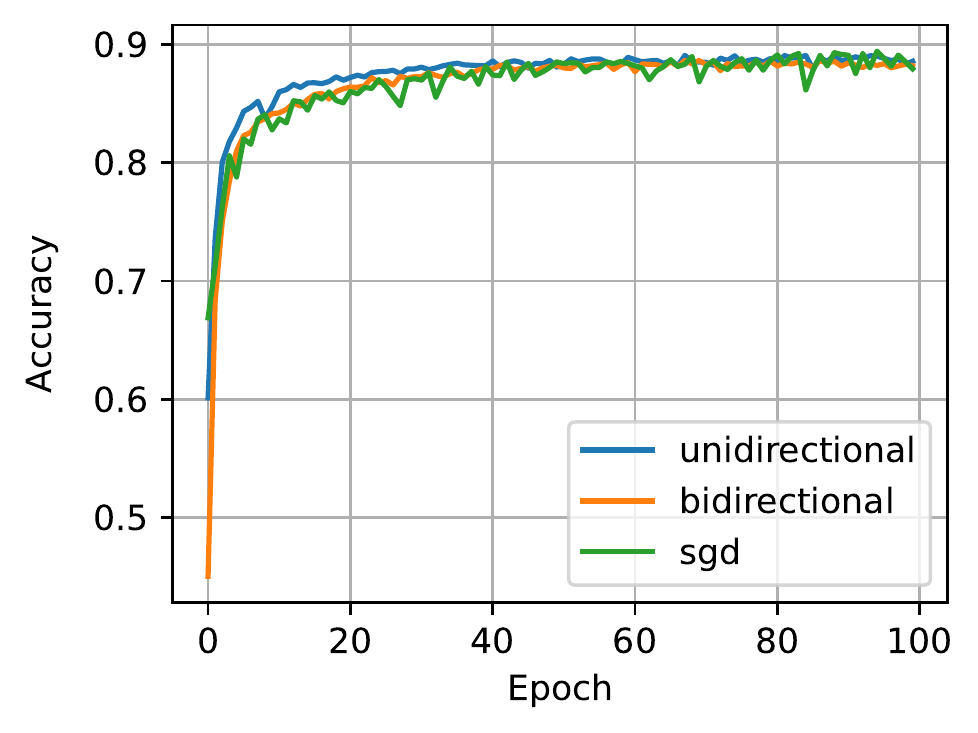}
    \caption*{MLP Model Trained on Fashion MNIST on 100 Workers}
    \label{fig:fmnist_mlp_accuracy}
\end{minipage}%
\caption{Comparison of testing accuracy.}
\label{fig:accuracy_main}
\end{figure}

\begin{figure}[t]
\begin{minipage}{.30\textwidth}
    \centering
    \includegraphics[width=1\textwidth]{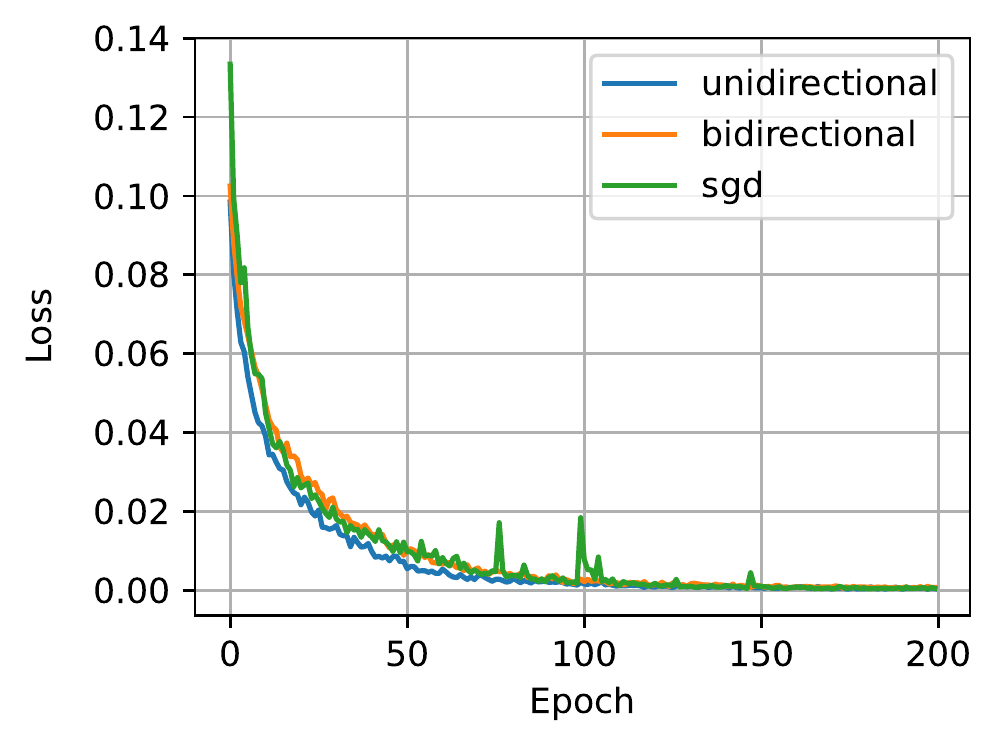}
    \caption*{VGG19 Model Trained on CIFAR10 on 20 Workers}
    \label{fig:cifar_vgg_loss}
\end{minipage}
\hfill
\begin{minipage}{.30\textwidth}
    \centering
    \includegraphics[width=1\textwidth]{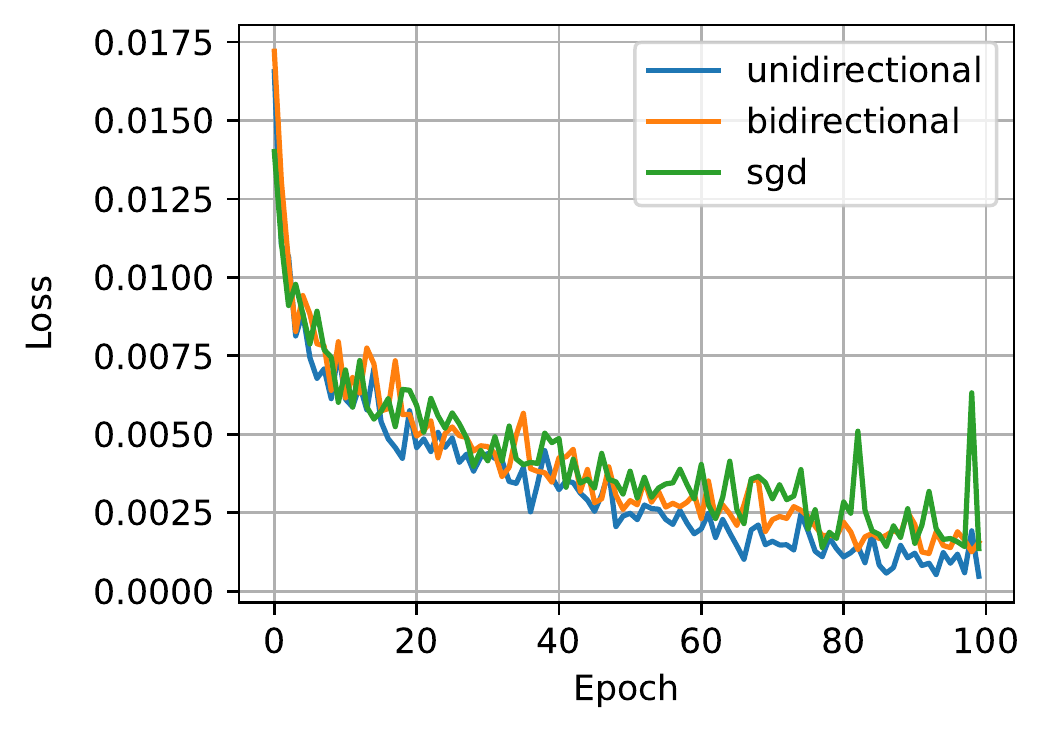}
    \caption*{MLP Model Trained on Fashion MNIST on 50 Workers}
    \label{fig:fmnist_cnn_loss}
\end{minipage}%
\hfill
\begin{minipage}{.30\textwidth}
    \centering
    \includegraphics[width=1\textwidth]{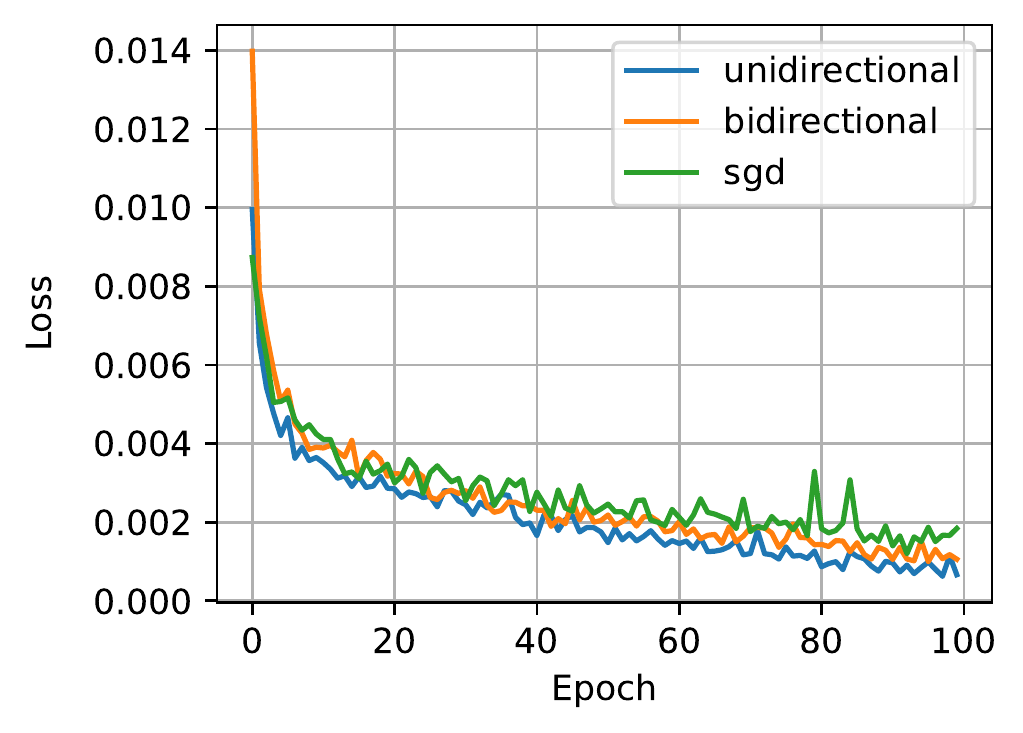}
    \caption*{CNN Model Trained on Fashion MNIST on 100 Workers}
    \label{fig:fmnist_mlp_loss}
\end{minipage}%
\caption{Comparison of training loss.}
\label{fig:loss_main}
\end{figure}

\subsection{Convergence Bound Constants}

We estimate the constants $\rho$ and $\hat \rho$ from the convergence bound of unidirectional and bidirectional top$_K$ SGD in Theorem \ref{bidirectional_convergence}, and plot them in Figure \ref{fig:rho_main}. We include plots for the rest of the tests in supplemental material \ref{appendix:rho}, and summarize the estimated $\rho$ and $\hat \rho$ values of all tests in Table \ref{tab:rho_value}. We see that the $\rho$ values for bidirectional top$_K$ SGD are consistently much smaller than the $\hat \rho$ values for unidirectional top$_K$ SGD. We also plot the maximum $1 - \gamma$ values in each epoch for bidirectional and unidirectional top$_K$ SGD in supplemental material \ref{appendix:gamma_values}, and mention that the values are approximately the same for both compression schemes. The smaller constant $\rho$ indicate that the convergence bound for bidirectional top$_K$ SGD could potentially be much smaller than the convergence bound for unidirectional top$_K$ SGD.

\begin{figure}[t]
\begin{minipage}{.30\textwidth}
    \centering
    \includegraphics[width=1\textwidth]{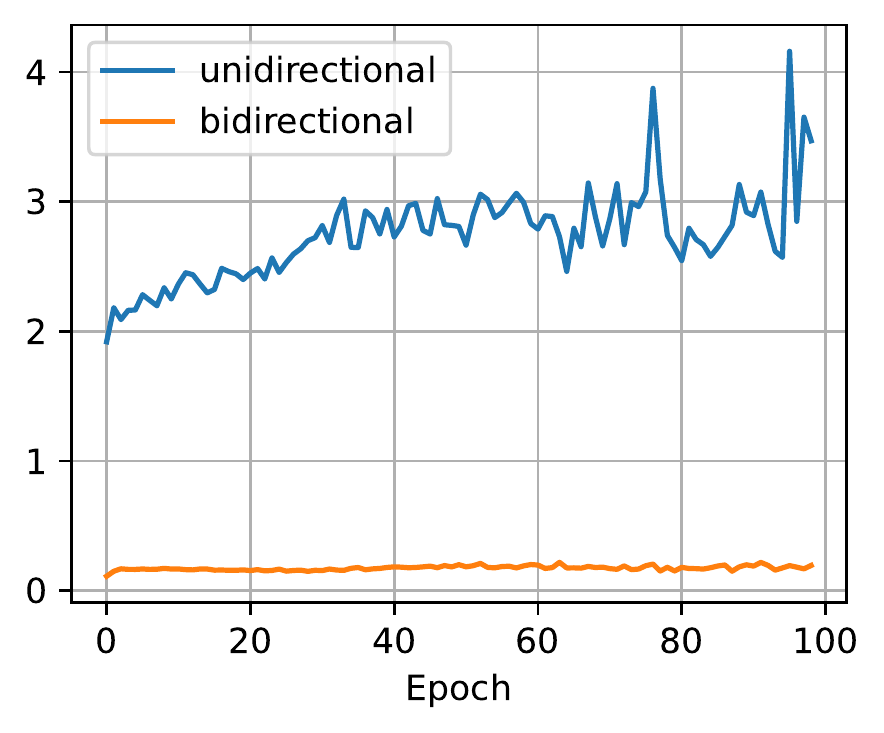}
    \caption*{VGG19 Model Trained on CIFAR10 on 20 Workers}
\end{minipage}
\hfill
\begin{minipage}{.30\textwidth}
    \centering
    \includegraphics[width=1\textwidth]{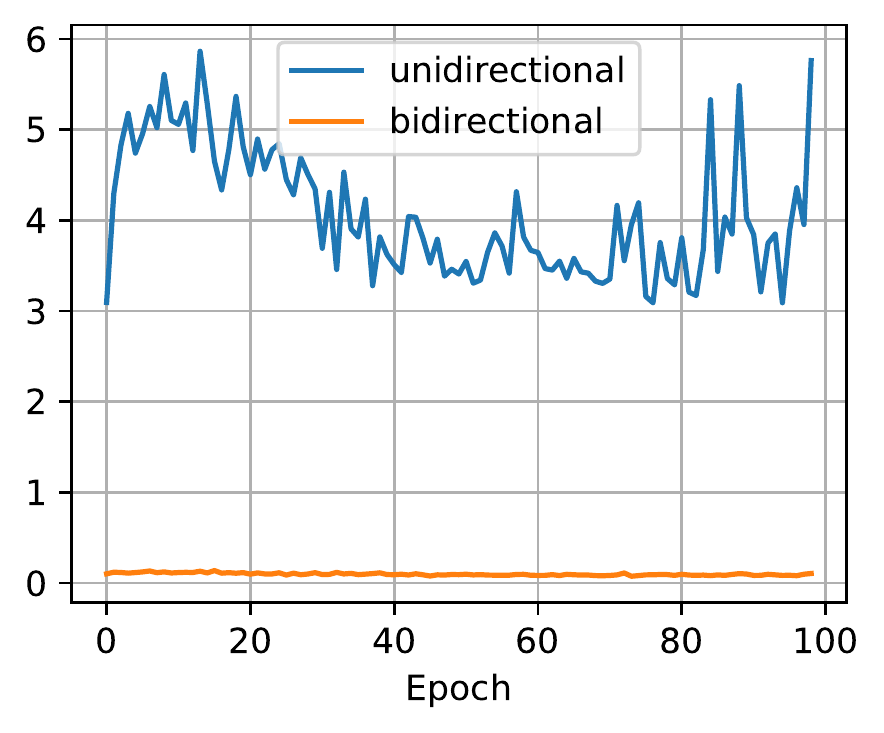}
    \caption*{MLP Model Trained on Fashion MNIST on 50 Workers}
\end{minipage}%
\hfill
\begin{minipage}{.30\textwidth}
    \centering
    \includegraphics[width=1\textwidth]{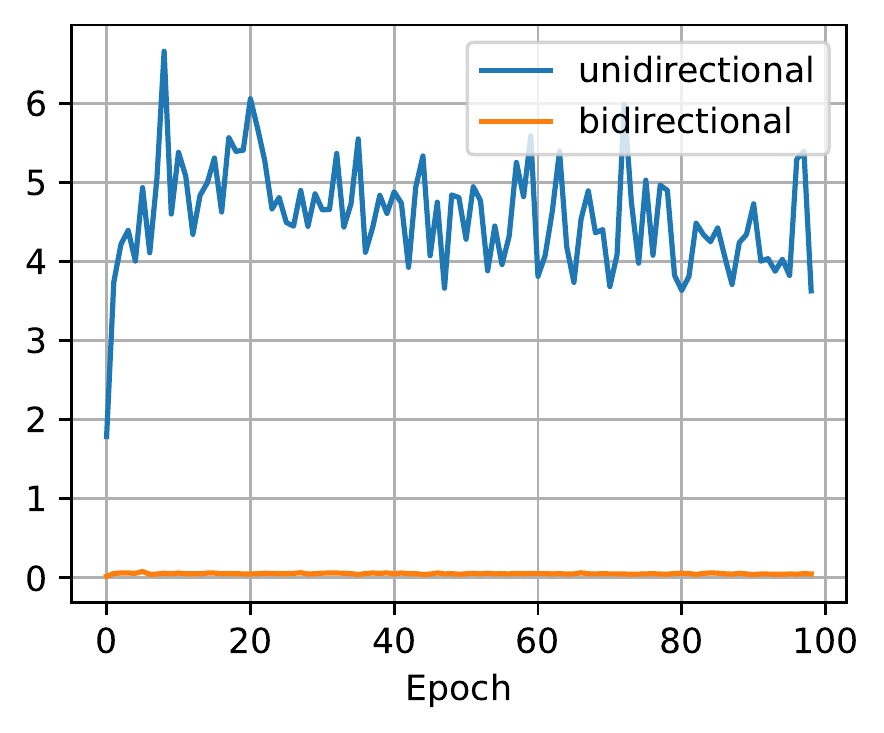}
    \caption*{MLP Model Trained on Fashion MNIST on 100 Workers}
\end{minipage}%
\caption{Largest $\rho$ and $\hat \rho$ value in each epoch.}
\label{fig:rho_main}
\end{figure}

\begin{table}[ht]
\caption{Maximum $\hat \rho$ and $\rho$ value of trained models across all epochs.}
\centering
%     \begin{adjustbox}{width=\textwidth,center}
    %n\begin{adjustbox}{center}
   .    \renewcommand{\arraystretch}{1.5}
        \begin{tabular}{c c c c c }
            \hline
            Dataset                & Model                 & Workers &    $\hat \rho$  &   $\rho$  \\ \hline
            \multirow{8}{*}{Fashion MNIST} & \multirow{4}{*}{MLP}  & 20      &    21.78     &    0.60  \\
                                   &                       & 50      &    5.86     &    0.14  \\
                                   &                       & 100     &    6.65     &    0.08  \\\cline{2-5}                                 
                                   & \multirow{4}{*}{CNN}  & 20    &  15.35        &    0.92  \\
                                   &                       & 50    &    7.48      &    0.24  \\
                                   &                       & 100   &    9.05     &    0.18  \\\cline{1-5}                                 
            \multirow{8}{*}{MNIST}& \multirow{4}{*}{MLP}  & 20      &    16.03     &    0.95 \\
                                   &                       & 50      &    9.72    &   0.28 \\
                                   &                       & 100     &   13.90    &    0.15  \\\cline{2-5}                                 
                                   & \multirow{4}{*}{CNN}  & 20    &    306.19      &    33.60  \\
                                   &                       & 50    &   24.26     &    1.09 \\
                                   &                       & 100   &    6.14      &    0.09 \\\cline{1-5}    
            CIFAR10 & VGG19 & 20 & 4.16 & 0.21  \\                                                 \hline
        \end{tabular}
    %\end{adjustbox}
%     \vspace{ - 05 mm}
   
    \label{tab:rho_value}
\end{table}

\subsection{Communication Saved}
\label{subsection:communication_saved}
 Unlike uplink compression, the number of bits saved from downlink compression is related to the number of participating workers. If $N$ workers contributed sparse gradients with $K_\text{uplink}$ non-zero indices, then the gradient sent back by the server after aggregation is a sparse gradient with at most $K_\text{uplink} N$ non-zero indices, and applying top$_K$ sparsification in the downlink will compress the gradient to at most $K_\text{downlink}/ (K_\text{uplink} N)$ of its size. We measure the percentage of non-zero indices in the sum of gradients from the workers, as shown in Figure \ref{fig:fmnist_mlp_compression} and supplemental material \ref{appendix:non-zero}. For all experiments, we see that the fraction approaches  $K_\text{uplink} N / d$ as the number of iterations increase, showing that the compression rate of the downlink top$_K$ compressor is almost as large as possible.
 
The time it takes to transfer a gradient from worker-to-server then server-to-worker for unidirectional compression is
\begin{equation}
    \alpha_1 + 2 K_\text{uplink} \beta_1 + \alpha_2 + 2 N K_\text{uplink} \beta_2,
\end{equation}
and the time it takes to transfer a message in bidirectional compression is
\begin{equation}
    \alpha_1 + 2 K_\text{uplink} \beta_1 + \alpha_2 +  2 K_\text{downlink} \beta_2,
\end{equation}
where $\alpha_1$ is the uplink latency, $\alpha_2$ is the downlink latency, $\beta_1$ is the uplink transfer time for a 32-bit float, and $\beta_2$ is the downlink transfer time for a 32-bit float.
 
 \begin{figure}[ht]
    \centering
    \includegraphics[width=1\textwidth]{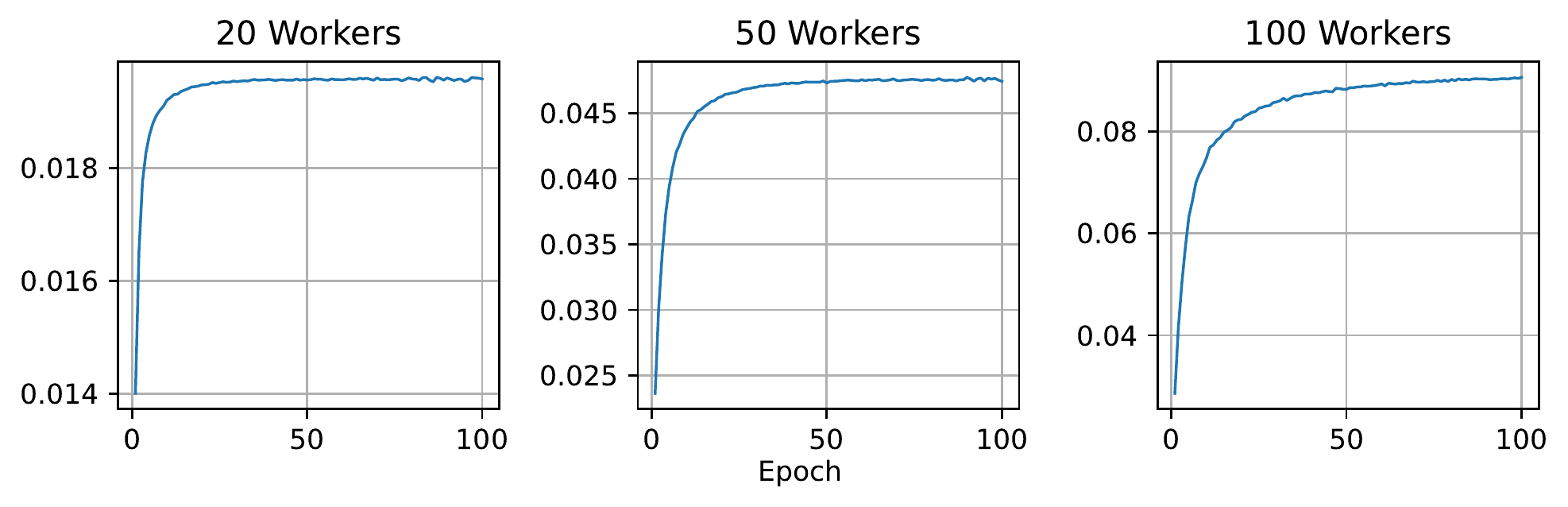}
    \caption{Fraction of non-zero indices after aggregating sparse gradients from workers. Trained on a MLP network using Fashion MNIST dataset. $K_\text{uplink} = K_\text{downlink} \approx  0.001d$.}
    \label{fig:fmnist_mlp_compression}
\end{figure}

\FloatBarrier

\section{Conclusion}

We demonstrate that bidirectional top$_K$ SGD that can potentially have a tighter convergence bound than unidirectional top$_K$ SGD while providing significant communication benefits. We remove the restriction of Alistarh \emph{et al.}'s non-convex analysis of unidirectional top$_K$ SGD that requires $K > \frac12$ \cite{alistarh2018convergence}. We provide testing across different models, datasets, and number of workers at state-of-the-art sparsification levels \cite{chen2021distributed} to show that bidirectional top$_K$ SGD can converge as well as unidirectional top$_K$ SGD. Our work shows that bidirectional compression should always be used, especially for large number of workers, because server-to-worker communication can be reduced by a factor proportional to $N$, without affecting convergence speed or accuracy.

\medskip

{
\small

% Bibliography

% The following statement selects the style to use for references.  
% It controls the sort order of the entries in the bibliography and also the formatting for the in-text labels.
\bibliographystyle{plain}
% This specifies the location of the file containing the bibliographic information.  
% It assumes you're using BibTeX to manage your references (if not, why not?).

% \bibliography{main.bib}
}

\clearpage

\appendix

\section{Appendix}

\subsection{Toy Example}
\label{appendex:toy_example}

Consider solving the distributed problem with 3 workers

\begin{equation*}
    \min_{w \in R^{100}} F(w) \triangleq \frac13 \sum_{q = 1}^3  F^q(w),
\end{equation*}
where

\begin{align*}
    &F^1(w) \triangleq \frac12 (w - \vec{1})^T (w - \vec{1}), \\
    &F^2(w) \triangleq  \frac12 (w - \vec{5})^T (w - \vec{5}), \\
    &F^3(w) \triangleq  \frac12 (w - \vec{10})^T (w - \vec{10}).
\end{align*}

The minimum value of $F(w)$ is $\frac{6100}{9}$, when $w = \vec{\frac{16}{3}}$. We initialize $w_0 \in \R^{100}$ generated from  $\mathcal{N}(20,\,1)$ with random seed $10$, and run Algorithm \ref{unidirectional_algorithm_client} and \ref{bidirectional_algorithm_client} with $K = 1$. Note that since we can solve the gradient, there is no stochastic element.

Both unidirectional (uplink) and bidirectional top$_K$ SGD  oscillates periodically at a distance away from the optimal value. However, bidirectional top$_K$ SGD converges closer to the $F^*$ than unidirectional uplink top$_K$ SGD, which is counter-intuitive, since we should be losing information by adding downlink compression.
 
 We provide an explanation of this by observing the gradients from each worker in unidirectional top$_K$ SGD at $t = 210$.  $a_t^q$ from each worker $q$ in line 3 of Algorithm \ref{unidirectional_algorithm_client} is shown in Figure \ref{fig:gradient_frozen_frame}. We note that the non-zero components from the sum of top$_K$ updates, $\sum_{i=1}^3 \text{Top}_K(a_t^q)$, shown in Figure \ref{fig:sparse_frozen_frame} is very different from the corresponding components in the sum of updates, $\sum_{i=1}^3 a_t^q$, shown in Figure \ref{fig:gradient_without_sparsification}. This happens because the components from worker 1 and worker 3 have opposite signs. If the largest gradient component from workers 1 and 3 do not have the same index, then $\sum_{i=1}^3 \text{Top}_K(a_t^q)$ will be far from $\sum_{i=1}^3 a_t^q$. In our example, the norm of the difference between the full update and unidirectional (uplink) update step is greater than the norm of the difference between the full and the bidirectional update. Specifically, for unidirectional (uplink) top$_K$ SGD, at $t = 210$,
\begin{equation*}
\begin{split}
& \| \sum_{q=1}^3 a_t^q - \sum_{q=1}^3 \text{Top}_K(a_t^q) \|_2  = 21.80, \\
\end{split}
\end{equation*}
while
\begin{equation*}
\begin{split}
& \| \sum_{q=1}^3 a_t^q - \text{Top}_K(\sum_{q=1}^3 \text{Top}_K(a_t^q)) \|_2  = 21.54, \\
\end{split}
\end{equation*}
showing that adding downlink top$_K$ sparsification brings the update closer to the uncompressed update.  While applying downlink compression causes the gradient to lose information, it also causes the gradient to lose ``bad" information. This motivates us to split the error into 2 parts for unidirectional top$_K$ SGD,
\begin{equation}
\label{unidirectional_error_split}
\begin{split}
& \| \sum_{q=1}^N p_q a_t^q  - \sum_{q=1}^N p_q \text{Top}_K(a_t^q) \|_2 \\
&\le \| \sum_{q=1}^N p_q a_t^q - \text{Top}_K(\sum_{q=1}^N p_q a_t^q) \|_2 + \| \text{Top}_K(\sum_{q=1}^N p_q a_t^q) - \sum_{q=1}^N p_q \text{Top}_K(a_t^q) \|_2.
\end{split}
\end{equation}

One benefit of the new representation is that it has nice physical meaning. The first norm is the error inherent to the compressor and can be bounded by $\gamma$-approximate compressor defined in Assumption (\ref{biased_compressor_assumption}), and the second norm is the error that comes from the distributed system, which occurs when the gradients from the local workers are not representative of the global gradient. Plotting $\| \text{Top}_K(\sum_{q=1}^N p_q a_t^q) - \sum_{q=1}^N p_q \text{Top}_K(a_t^q) \|_2$ and $\| \text{Top}_K(\delta_{t-1} + \sum_{q=1}^N p_q a_t^q) - \text{Top}_K(\delta_{t-1} + \sum_{q=1}^N p_q \text{Top}_K(a_t^q)) \|_2$ in Figure \ref{fig:ratio_20} for bidirectional and unidirectional downlink top$_K$ SGD, we see that adding a downlink top$_K$ compressor to unidirectional top$_K$ SGD can reduce the distributed error. Intuitively, the update step is noisy, and applying an extra top$_K$ compression would reduce the noise.

\setlength{\textfloatsep}{10pt plus 1.0pt minus 2.0pt}
\begin{figure}[ht]
    \centering
    \includegraphics[width=\textwidth]{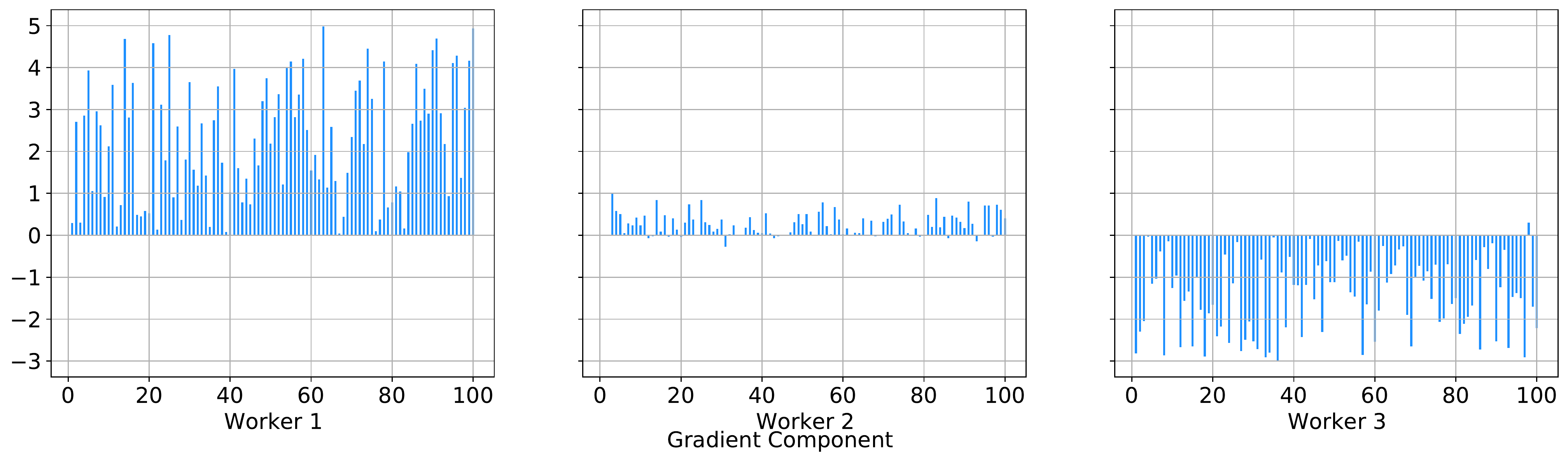}
    \caption{$a_t^q$ from each worker at $t = 300$ for unidirectional top$_K$ SGD.}
    
    \label{fig:gradient_frozen_frame}
\end{figure}

\begin{figure}[t]
\begin{minipage}{.47\textwidth}
    \centering
    \includegraphics[width=1\textwidth]{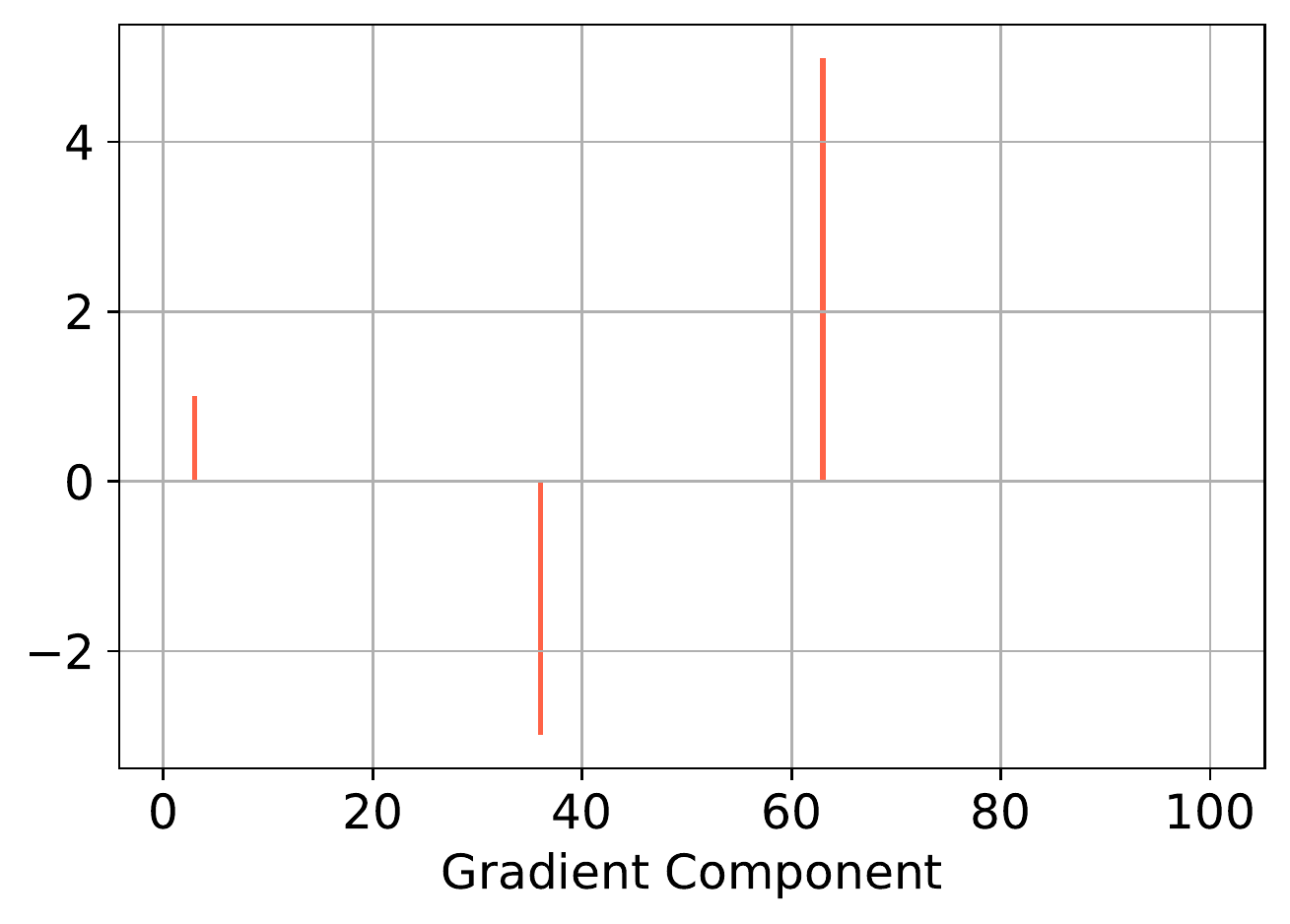}
    \caption{$\sum_{q=1}^N \text{Top}_K(a_t^q)$ at $t = 300$ for unidirectional top$_K$ SGD.}
    \label{fig:sparse_frozen_frame}
\end{minipage}
\hfill
\begin{minipage}{.47\textwidth}
    \centering
    \includegraphics[width=1\textwidth]{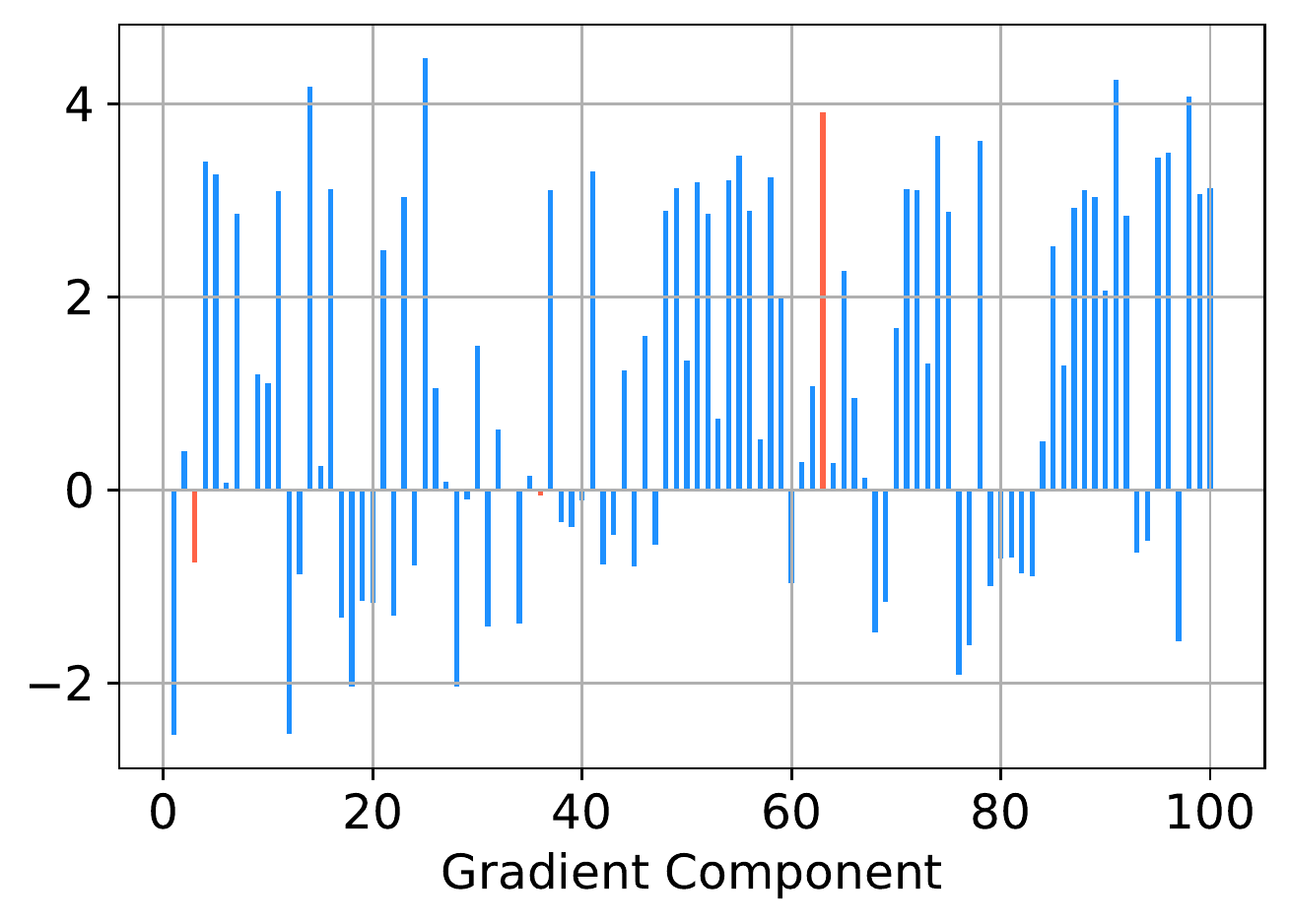}
    \caption{$\sum_{q=1}^N a_t^q$ at $t = 300$ for unidirectional top$_K$ SGD.}
    \label{fig:gradient_without_sparsification}
\end{minipage}%
\end{figure}

\begin{figure}[ht]
\begin{minipage}{.47\textwidth}
    \centering
    \includegraphics[width=1\textwidth]{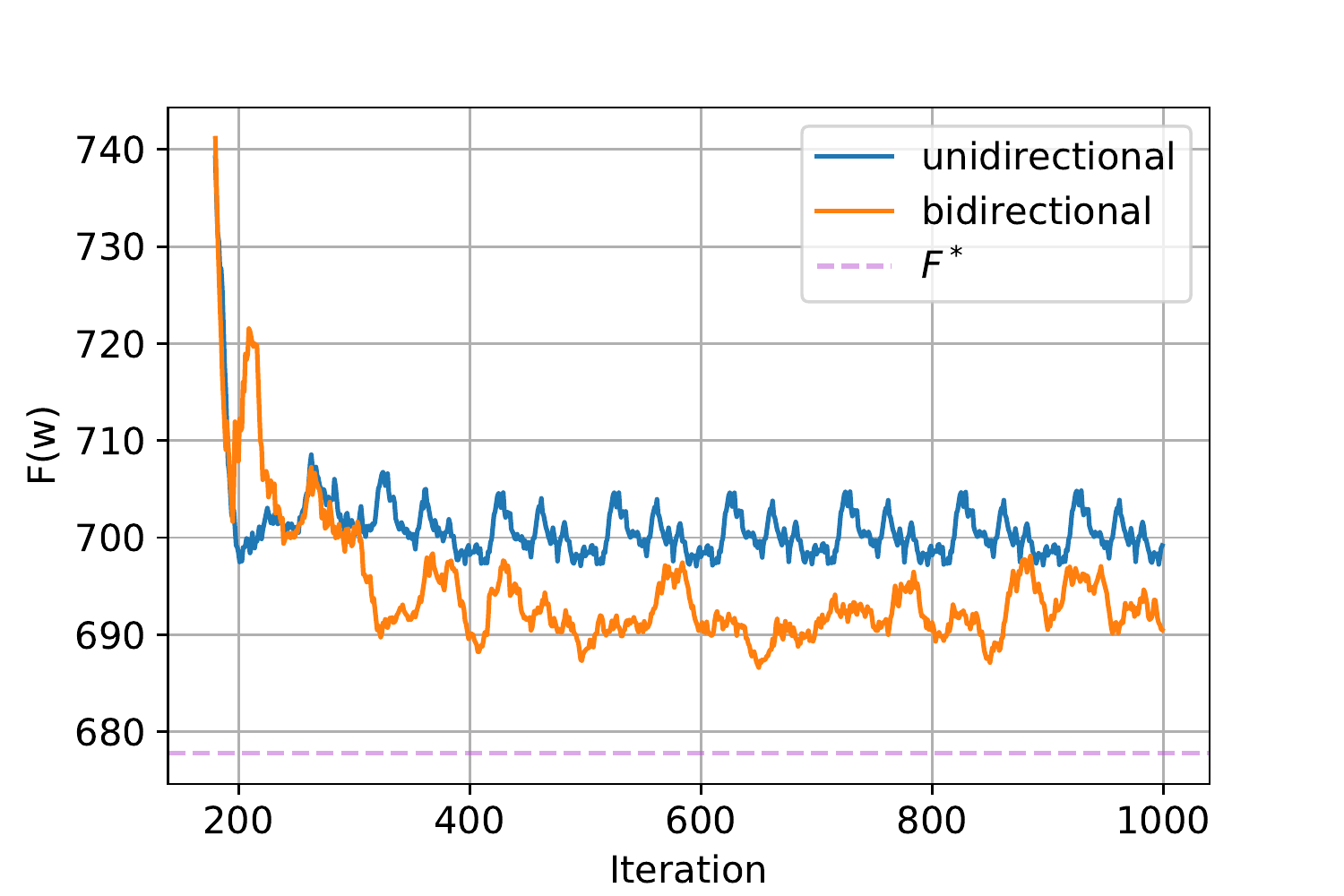}
    \caption{Convergence of unidirectional vs bidirectional top$_K$ SGD run with $K=1$ and $\alpha = 0.01$.}
    \label{fig:toy_example_convergence}
\end{minipage}
\hfill
\begin{minipage}{.47\textwidth}
    \centering
    \includegraphics[width=1\textwidth]{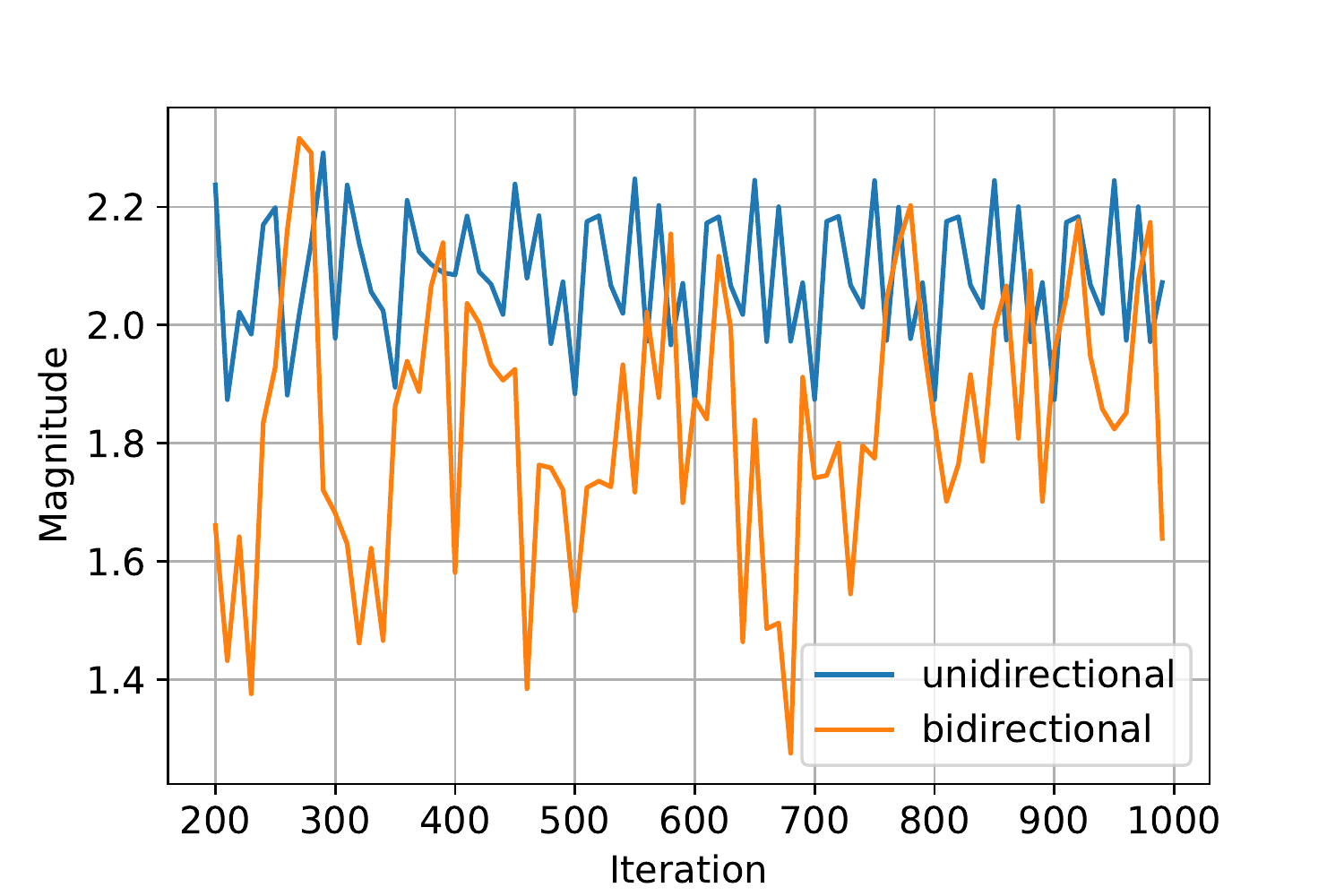}
    \caption{Average $\| \text{Top}_K(\sum_{q = 1}^N p_q a_t^q) - \sum_{q=1}^N \text{Top}_K(p_q a_t^q) \|_2$ values every 10 iterations for unidirectional top$_K$ SGD and average $\| \text{Top}_K(\delta_{t-1} + \sum_{q = 1}^N p_q a_t^q) - \text{Top}_K(\delta_{t-1} + \sum_{q=1}^N \text{Top}_K(p_q a_t^q)) \|_2$ values every 10 iterations for bidirectional top$_K$ SGD.}
    \label{fig:ratio_20}
\end{minipage}
\end{figure}

\FloatBarrier

\subsection{Learning Rates}
\label{appenix:learning_rates}
\begin{table}[ht]
\caption{Learning rate chosen for all models.}

\centering
%     \begin{adjustbox}{width=\textwidth,center}
    %n\begin{adjustbox}{center}
   .    \renewcommand{\arraystretch}{1}
        \begin{tabular}{c c c c c c }
            \hline
            Dataset                & Model                 & Workers &    unidirectional lr  &   bidirectional lr & SGD lr \\ \hline
            \multirow{8}{*}{Fashion MNIST} & \multirow{4}{*}{MLP}   & 20      &    0.08     &    0.08  &  0.06  \\
                                   &                       & 50      &    0.13     &    0.12  &  0.07 \\
                                   &                       & 100     &    0.22     &    0.22 & 0.08 \\\cline{2-6}                                 
                                   & \multirow{4}{*}{CNN}   & 20    &  0.09       &    0.08  &  0.06 \\
                                   &                       & 50    &    0.12      &    0.11  &  0.07 \\
                                   &                       & 100   &  0.14     &    0.2 &  0.08 \\\cline{1-6}                                 
            \multirow{8}{*}{MNIST}& \multirow{4}{*}{MLP}   & 20      &   0.06     &   0.09  & 0.03 \\
                                   &                       & 50      &    0.17    &   0.18  & 0.1 \\
                                   &                       & 100     &   0.17   &    0.24  &  0.1 \\\cline{2-6}                                 
                                   & \multirow{4}{*}{CNN}  & 20    &    0.08     &   0.09  & 0.05\\
                                   &                       & 50    &   0.14     &    0.16 & 0.07 \\
                                   &                       & 100   &    0.09      &    0.16 & 0.07 \\\cline{1-6}    
            CIFAR10 & VGG19 & 20 & 0.05 & 0.05 & 0.05 \\                                                 \hline
        \end{tabular}
    %\end{adjustbox}
%     \vspace{ - 05 mm}
    \label{tab:learning_rate}
\end{table}

\FloatBarrier

% \begin{table}
%   \caption{Sample table title}
%   \label{sample-table}
%   \centering
%   \begin{tabular}{lll}
%     \toprule
%     \multicolumn{2}{c}{Part}                   \\
%     \cmidrule(r){1-2}
%     Name     & Description     & Size ($\mu$m) \\
%     \midrule
%     Dendrite & Input terminal  & $\sim$100     \\
%     Axon     & Output terminal & $\sim$10      \\
%     Soma     & Cell body       & up to $10^6$  \\
%     \bottomrule
%   \end{tabular}
% \end{table}

\subsection{Experiment Figures}

\subsubsection{Testing Accuracy and Training Loss}
\label{appendix:testing_accuracy_training_loss}

\begin{figure}[ht]
    \centering
    \includegraphics[width=1\textwidth]{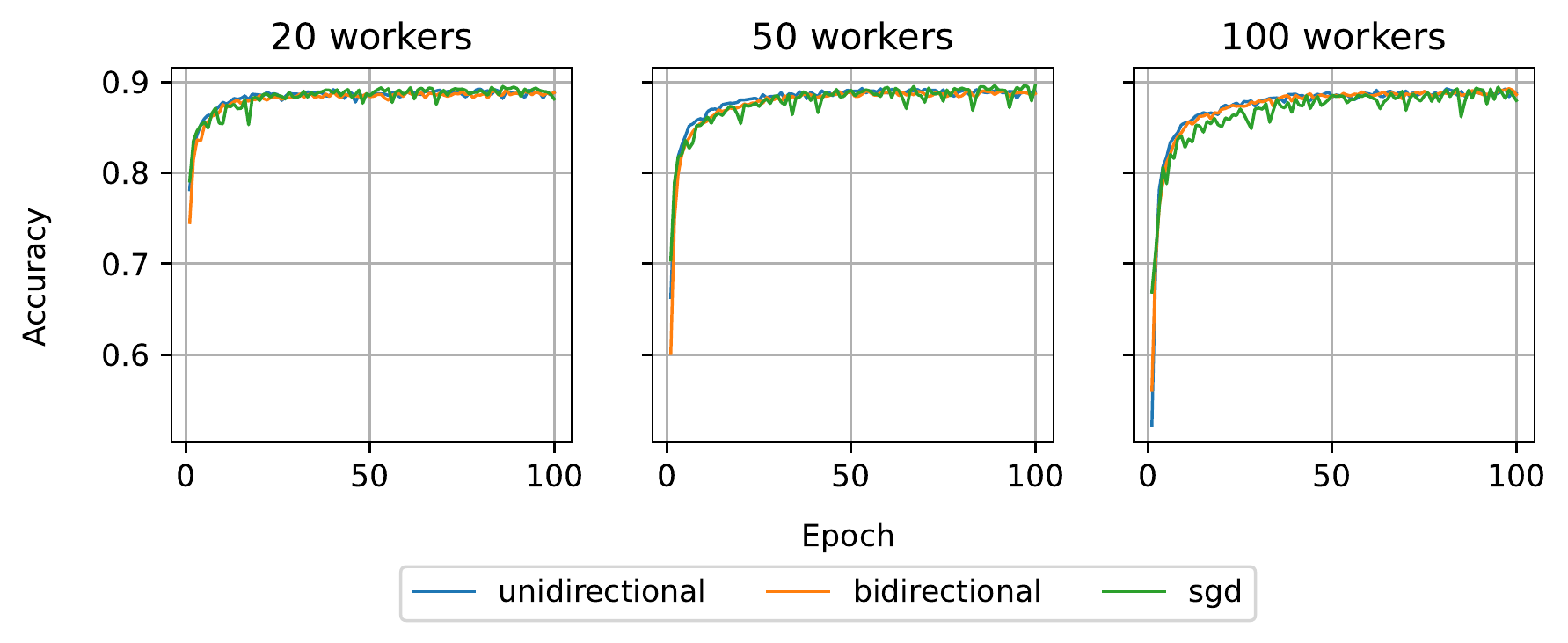}
    \caption{Comparison of testing accuracy for MLP model trained on Fashion MNIST data for unidirectional top$_{K}$,  bidirectional top$_{K}$ and vanilla SGD. Trained on batch size 10. $K_\text{downlink} = K_\text{uplink} \approx 0.001 d$ for bidirectional. $K_\text{uplink} \approx 0.001 d$ for unidirectional.}
    \label{fig:fmnist_mlp_test_accuracy}
\end{figure}

\begin{figure}[ht]
    \centering
    \includegraphics[width=1\textwidth]{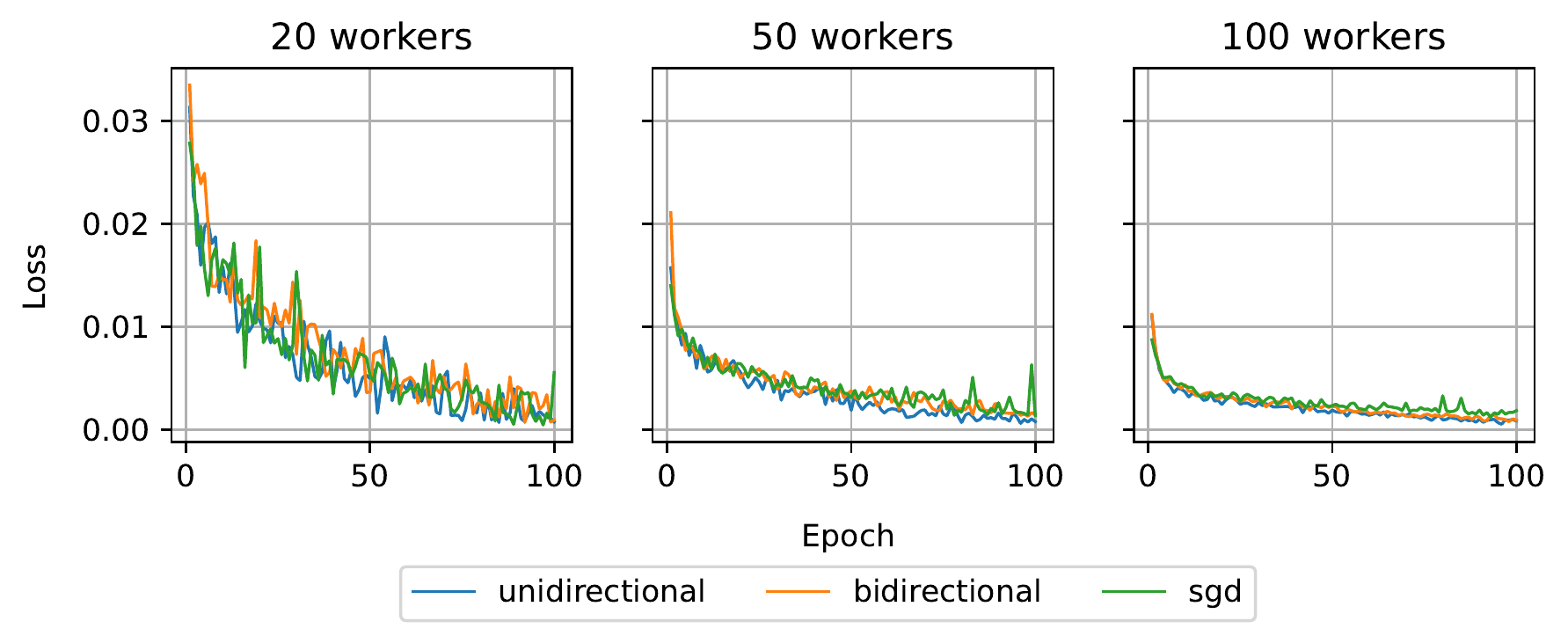}
     \caption{Comparison of training loss for MLP model trained on Fashion MNIST data for unidirectional top$_{K}$,  bidirectional top$_{K}$ and vanilla SGD. Trained on batch size 10. $K_\text{downlink} = K_\text{uplink} \approx 0.001 d$ for bidirectional. $K_\text{uplink} \approx 0.001 d$ for unidirectional.}
    \label{fig:fmnist_mlp_train_loss}
\end{figure}
% \subsection{Fashion MNIST CNN Model}
\begin{figure}[ht]
    \centering
    \includegraphics[width=1\textwidth]{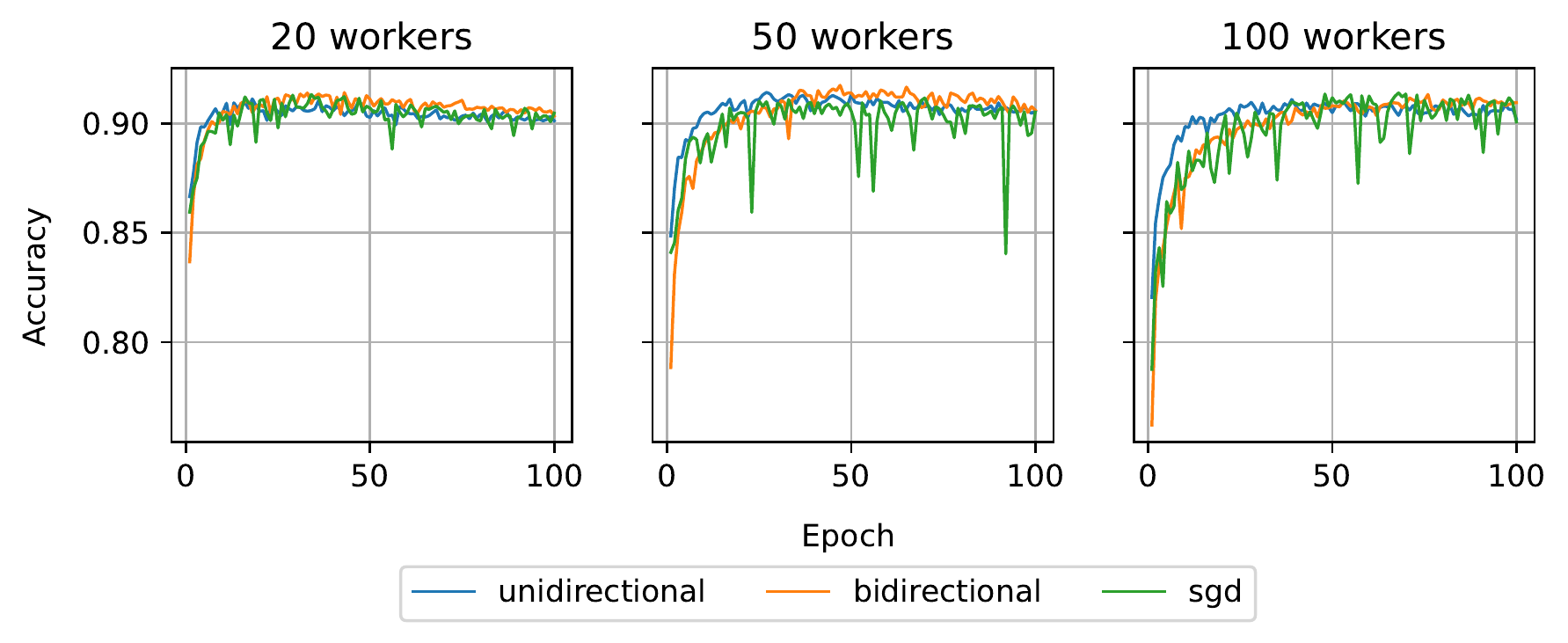}
    \caption{Comparison of testing accuracy from training CNN model on Fashion MNIST data for unidirectional top$_{K}$,  bidirectional top$_{K}$ and vanilla SGD. Trained on batch size 10. $K_\text{downlink} = K_\text{uplink} \approx 0.001 d$ for bidirectional. $K_\text{uplink} \approx 0.001 d$ for unidirectional.}
    \label{fig:fmnist_cnn_test_accuracy}
\end{figure}

\begin{figure}[ht]
    \centering
    \includegraphics[width=1\textwidth]{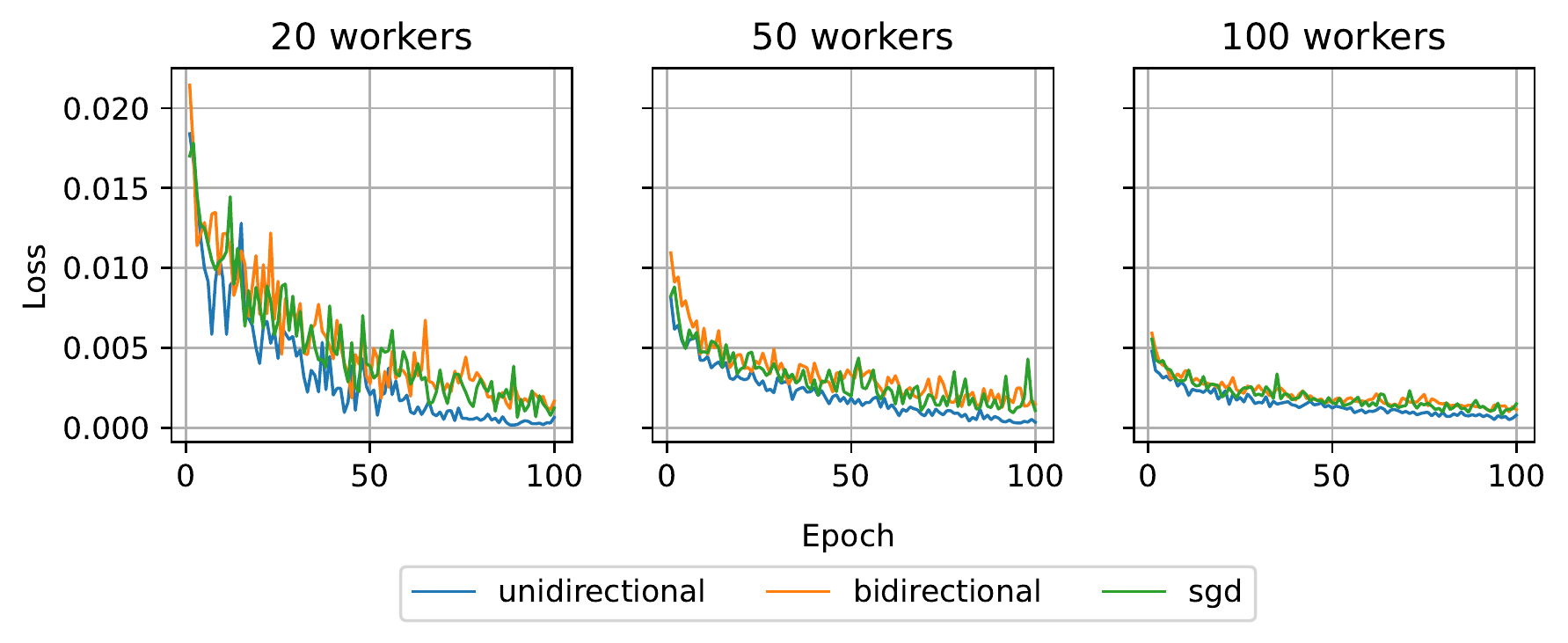}
    \caption{Comparison of training loss from training CNN model on Fashion MNIST data for unidirectional top$_{K}$,  bidirectional top$_{K}$ and vanilla SGD. Trained on batch size 10. $K_\text{downlink} = K_\text{uplink} \approx 0.001 d$ for bidirectional. $K_\text{uplink} \approx 0.001 d$ for unidirectional.}
    \label{fig:fmnist_cnn_train_loss}
\end{figure}

% \FloatBarrier

% \subsection{MNIST MLP Model}
\begin{figure}[ht]
    \centering
    \includegraphics[width=1\textwidth]{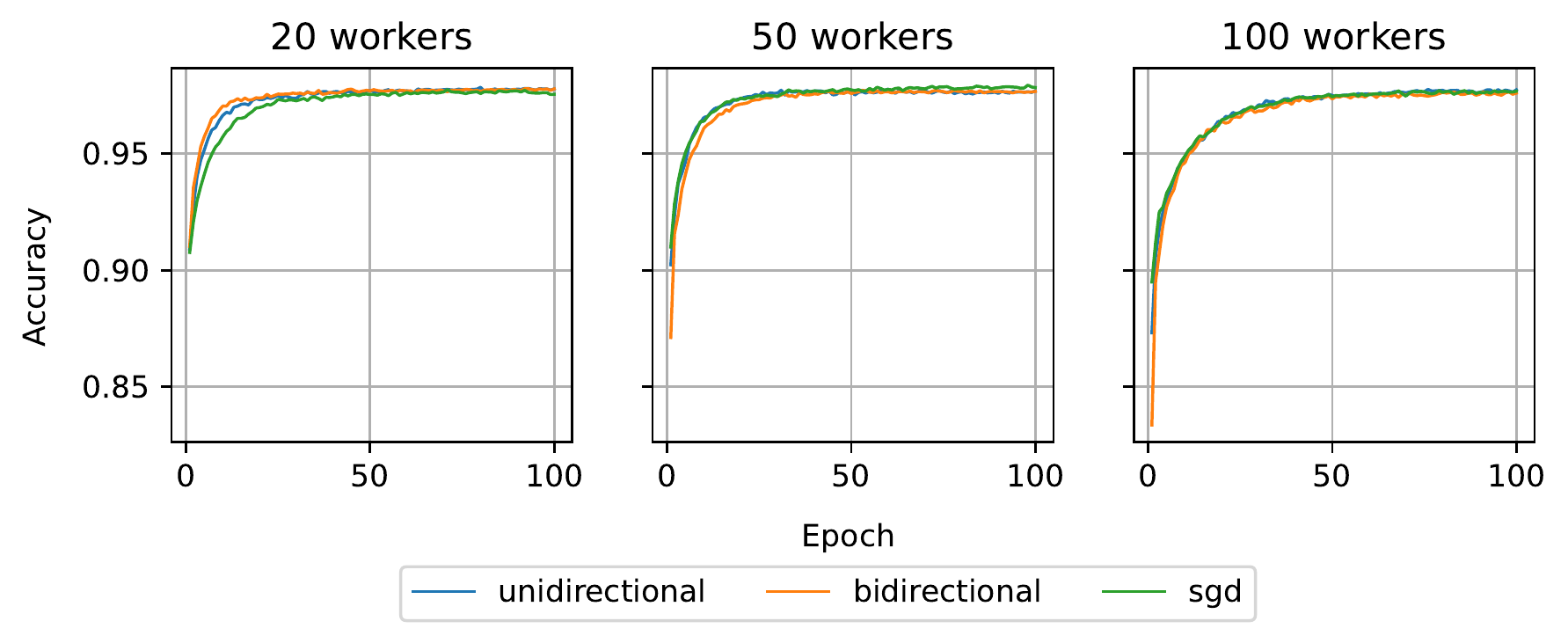}
    \caption{Comparison of testing accuracy from training MLP model on MNIST data for unidirectional top$_{K}$,  bidirectional top$_{K}$ and vanilla SGD. Trained on batch size 10. $K_\text{downlink} = K_\text{uplink} \approx 0.001 d$ for bidirectional. $K_\text{uplink} \approx 0.001 d$ for unidirectional.}
    \label{fig:mnist_mlp_test_accuracy}
\end{figure}

\begin{figure}[ht]
    \centering
    \includegraphics[width=1\textwidth]{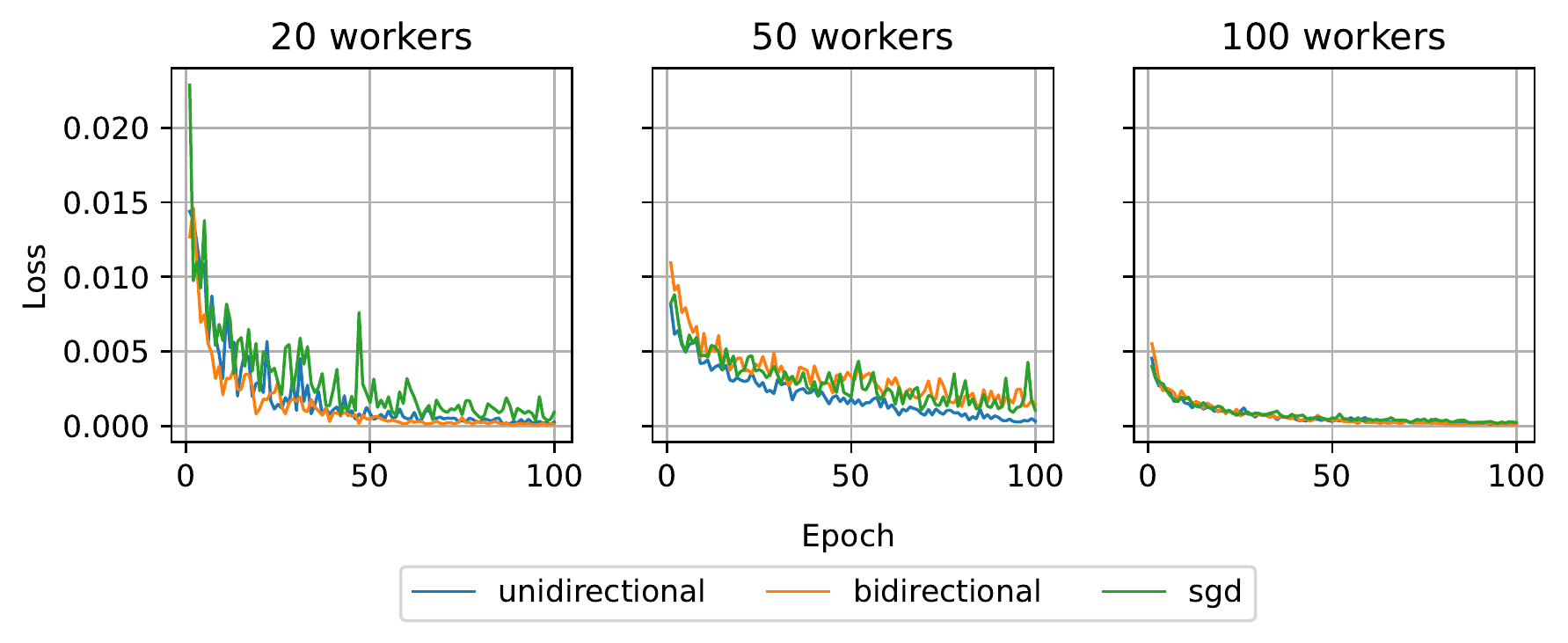}
    \caption{Comparison of training loss from training MLP model on MNIST data for unidirectional top$_{K}$,  bidirectional top$_{K}$ and vanilla SGD. Trained on batch size 10. $K_\text{downlink} = K_\text{uplink} \approx 0.001 d$ for bidirectional. $K_\text{uplink} \approx 0.001 d$ for unidirectional.}
    \label{fig:mnist_mlp_train_loss}
\end{figure}

% \FloatBarrier
% \subsection{MNIST CNN Model}

\begin{figure}[ht]
    \centering
    \includegraphics[width=1\textwidth]{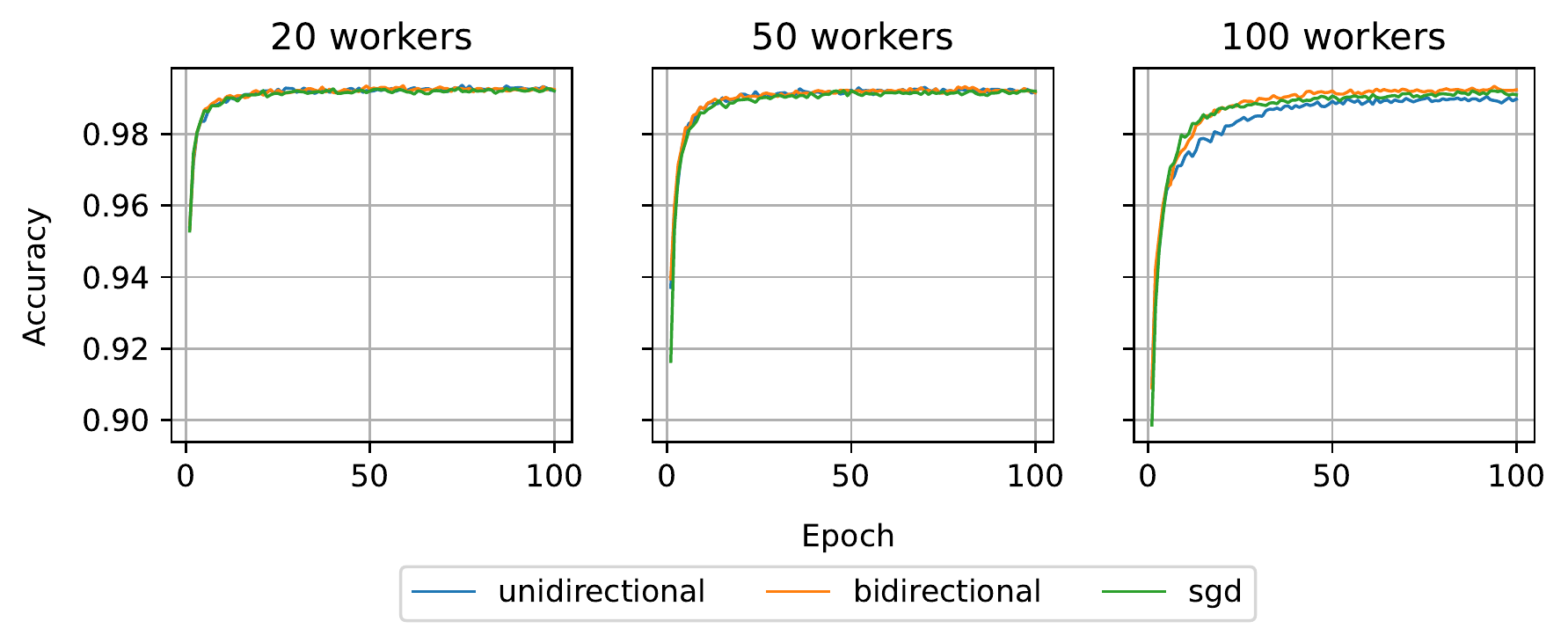}
    \caption{Comparison of testing accuracy from training CNN model on MNIST data for unidirectional top$_{K}$,  bidirectional top$_{K}$ and vanilla SGD. Trained on batch size 10. $K_\text{downlink} = K_\text{uplink} \approx 0.001 d$ for bidirectional. $K_\text{uplink} \approx 0.001 d$ for unidirectional.}
    \label{fig:mnist_cnn_test_accuracy}
\end{figure}

\begin{figure}[ht]
    \centering
    \includegraphics[width=1\textwidth]{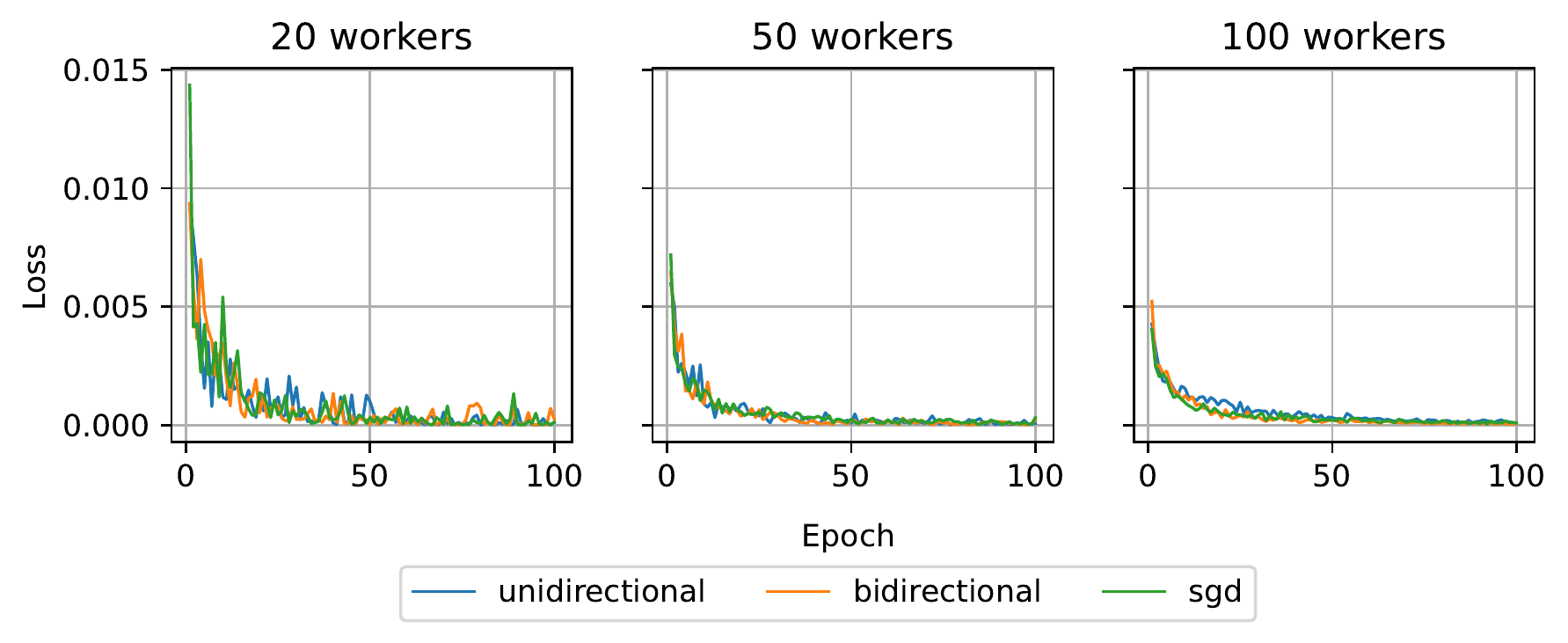}
    \caption{Comparison of training loss from training CNN model on MNIST data for unidirectional top$_{K}$,  bidirectional top$_{K}$ and vanilla SGD. Trained on batch size 10. $K_\text{downlink} = K_\text{uplink} \approx 0.001 d$ for bidirectional. $K_\text{uplink} \approx 0.001 d$ for unidirectional.}
    \label{fig:mnist_cnn_train_loss}
\end{figure}

\FloatBarrier

\subsubsection{\texorpdfstring{$\rho$ vs $\hat \rho$ Values}{Lg}}
\label{appendix:rho}
\begin{figure}[ht]
    \centering
    \includegraphics[width=1\textwidth]{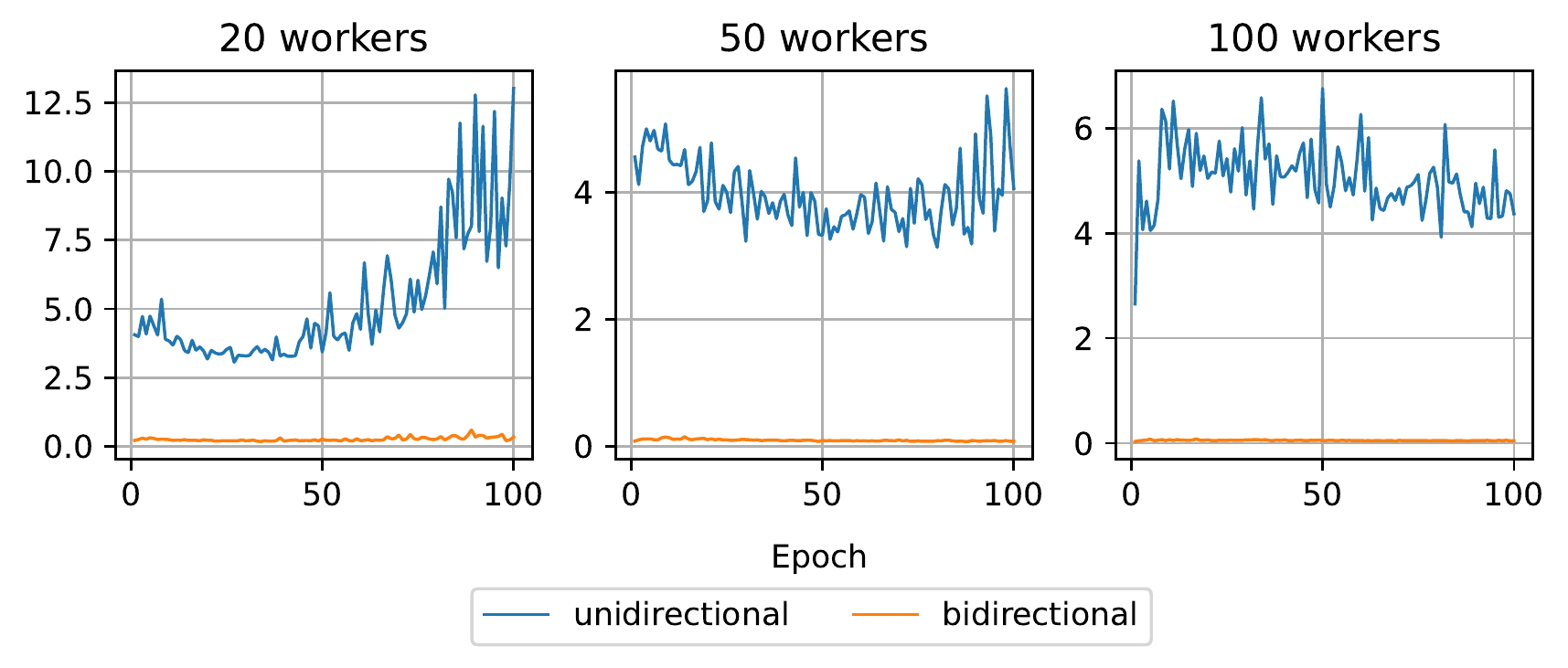}
    \caption{Comparison of $\rho$ vs $\hat \rho$ values from training MLP model on Fashion MNIST data for unidirectional top$_{K}$ and bidirectional top$_{K}$. Models trained on batch size 10. $K_\text{downlink} = K_\text{uplink} \approx 0.001 d$ for bidirectional. $K_\text{uplink} \approx 0.001 d$ for unidirectional.}
    \label{fig:fmnist_mlp_rho}
\end{figure}

\begin{figure}[ht]
    \centering
    \includegraphics[width=1\textwidth]{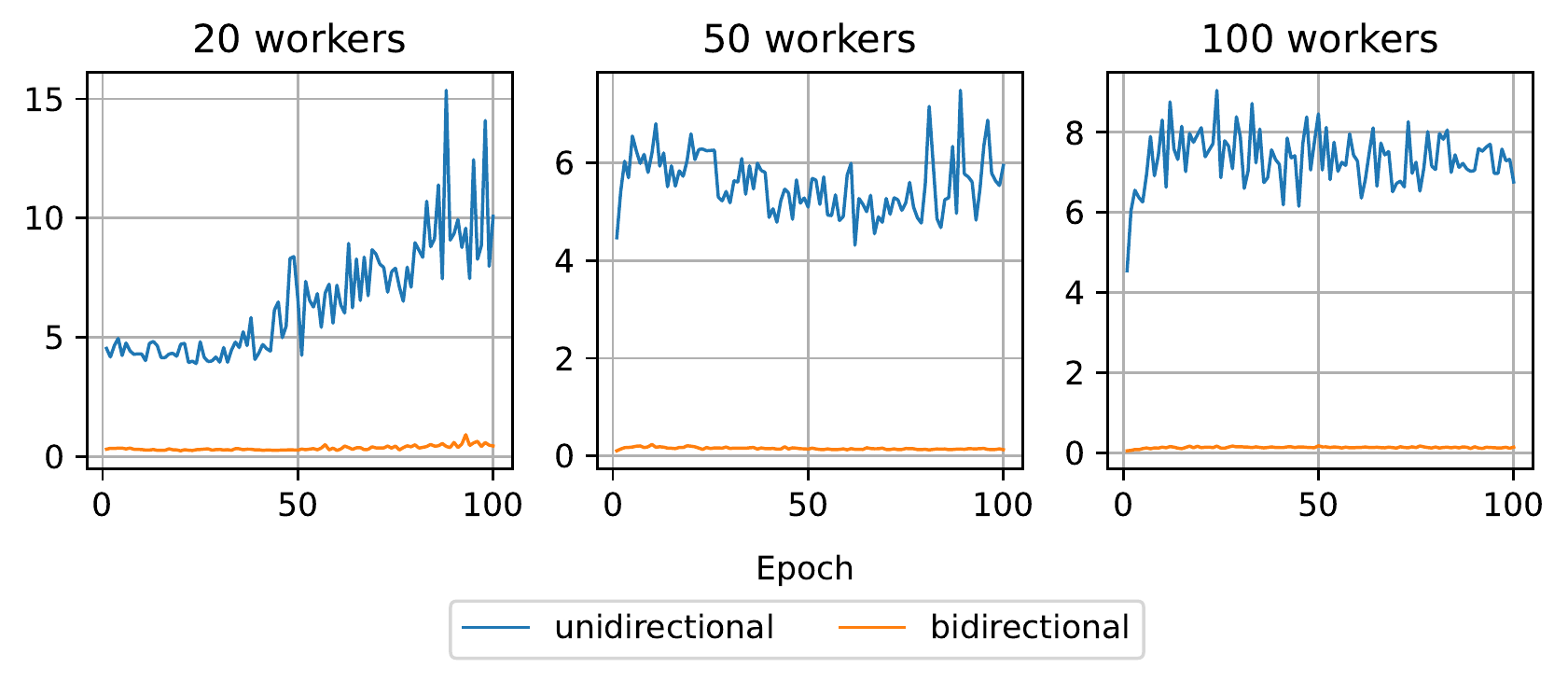}
    \caption{Comparison of $\rho$ vs $\hat \rho$ values from training CNN model on Fashion MNIST data for unidirectional top$_{K}$ and bidirectional top$_{K}$. Models trained on batch size 10. $K_\text{downlink} = K_\text{uplink} \approx 0.001 d$ for bidirectional. $K_\text{uplink} \approx 0.001 d$ for unidirectional.}
    \label{fig:fmnist_cnn_rho}
\end{figure}

\begin{figure}[ht]
    \centering
    \includegraphics[width=1\textwidth]{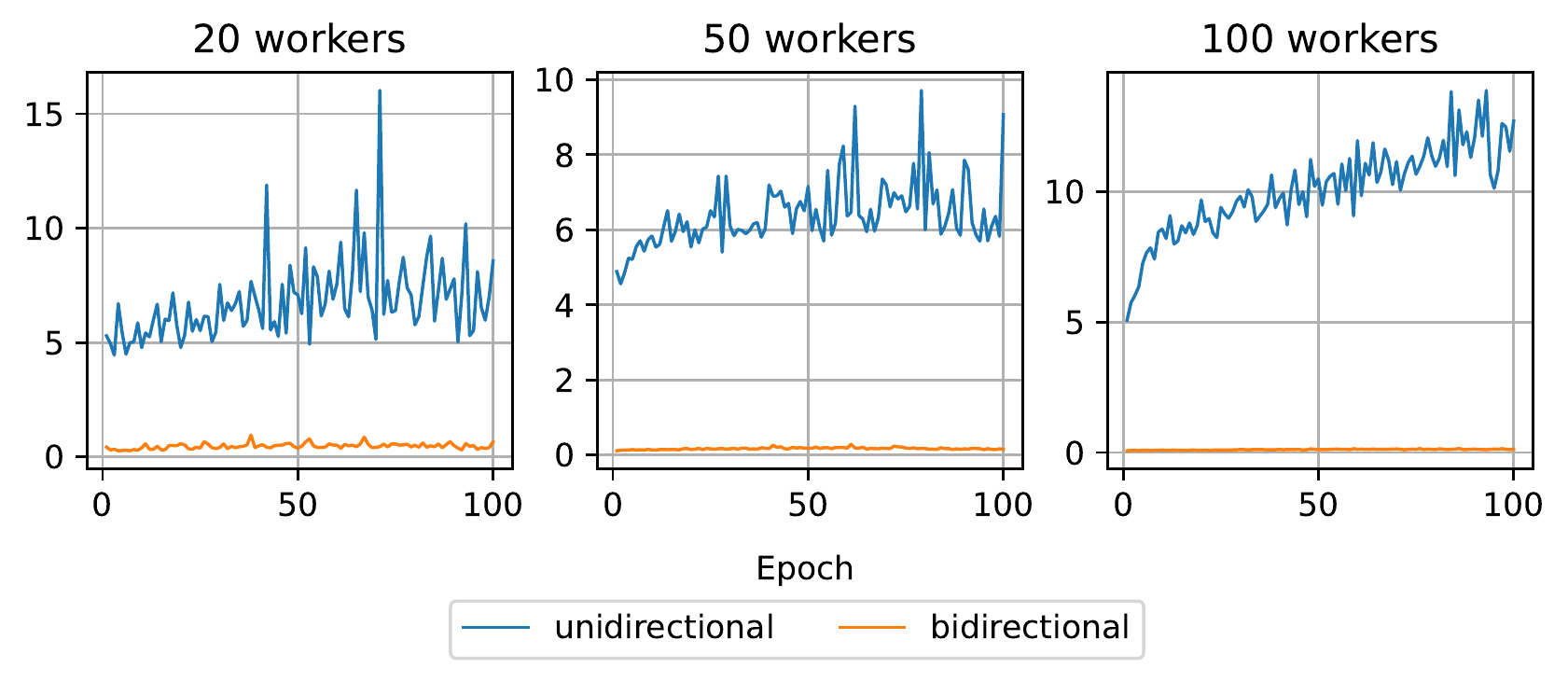}
    \caption{Comparison of $\rho$ vs $\hat \rho$ values from MLP Model trained on MNIST data for unidirectional top$_{K}$ and bidirectional top$_{K}$. Models trained on batch size 10. $K_\text{downlink} = K_\text{uplink} \approx 0.001 d$ for bidirectional. $K_\text{uplink} \approx 0.001 d$ for unidirectional.}
    \label{fig:mnist_mlp_rho}
\end{figure}

\begin{figure}[ht]
    \centering
    \includegraphics[width=1\textwidth]{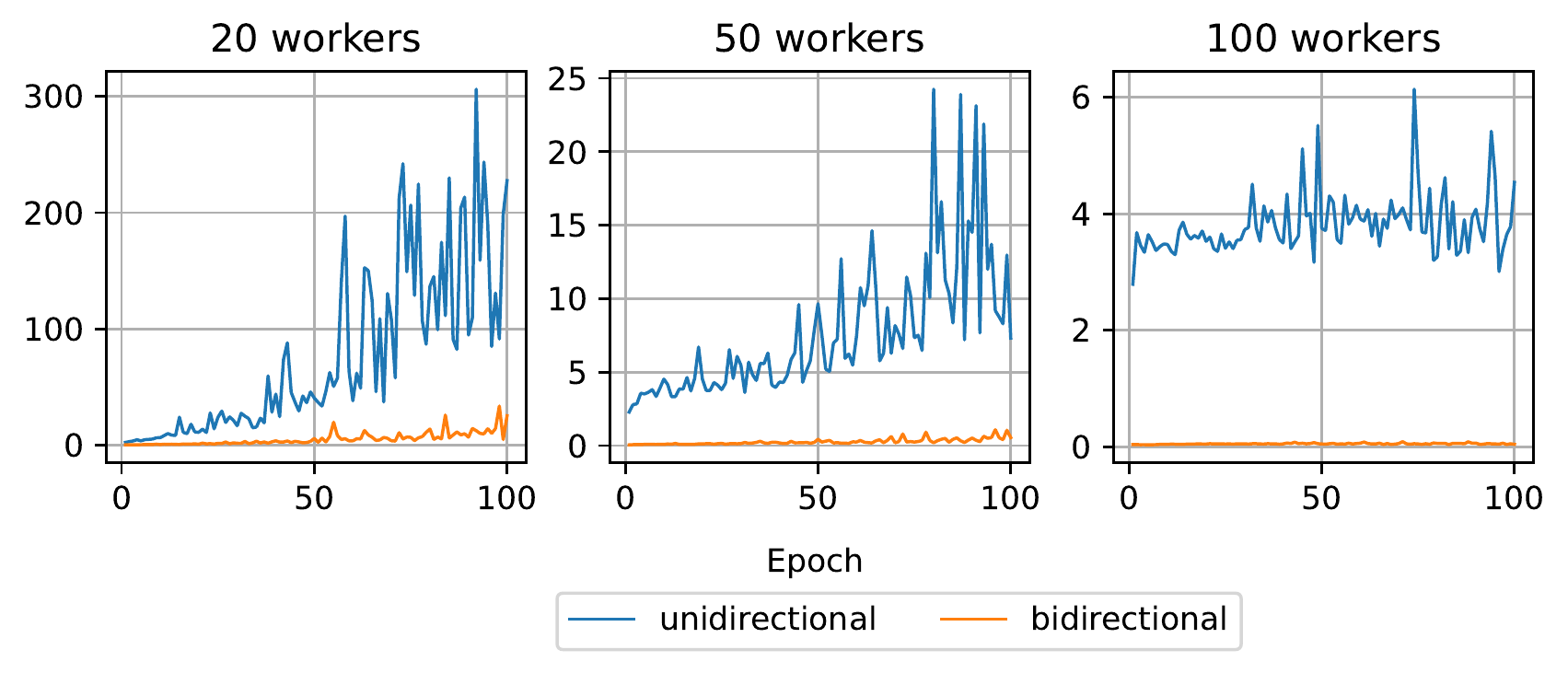}
    \caption{Comparison of $\rho$ vs $\hat \rho$ values from CNN Model trained on MNIST data for unidirectional top$_{K}$ and bidirectional top$_{K}$. Models trained on batch size 10. $K_\text{downlink} = K_\text{uplink} \approx 0.001 d$ for bidirectional. $K_\text{uplink} \approx 0.001 d$ for unidirectional.}
    \label{fig:mnist_cnn_rho}
\end{figure}

\FloatBarrier

\subsubsection{\texorpdfstring{$1 - \gamma$ values}{Lg}}
\label{appendix:gamma_values}

\begin{figure}[ht]
    \centering
    \includegraphics[width=0.6\textwidth]{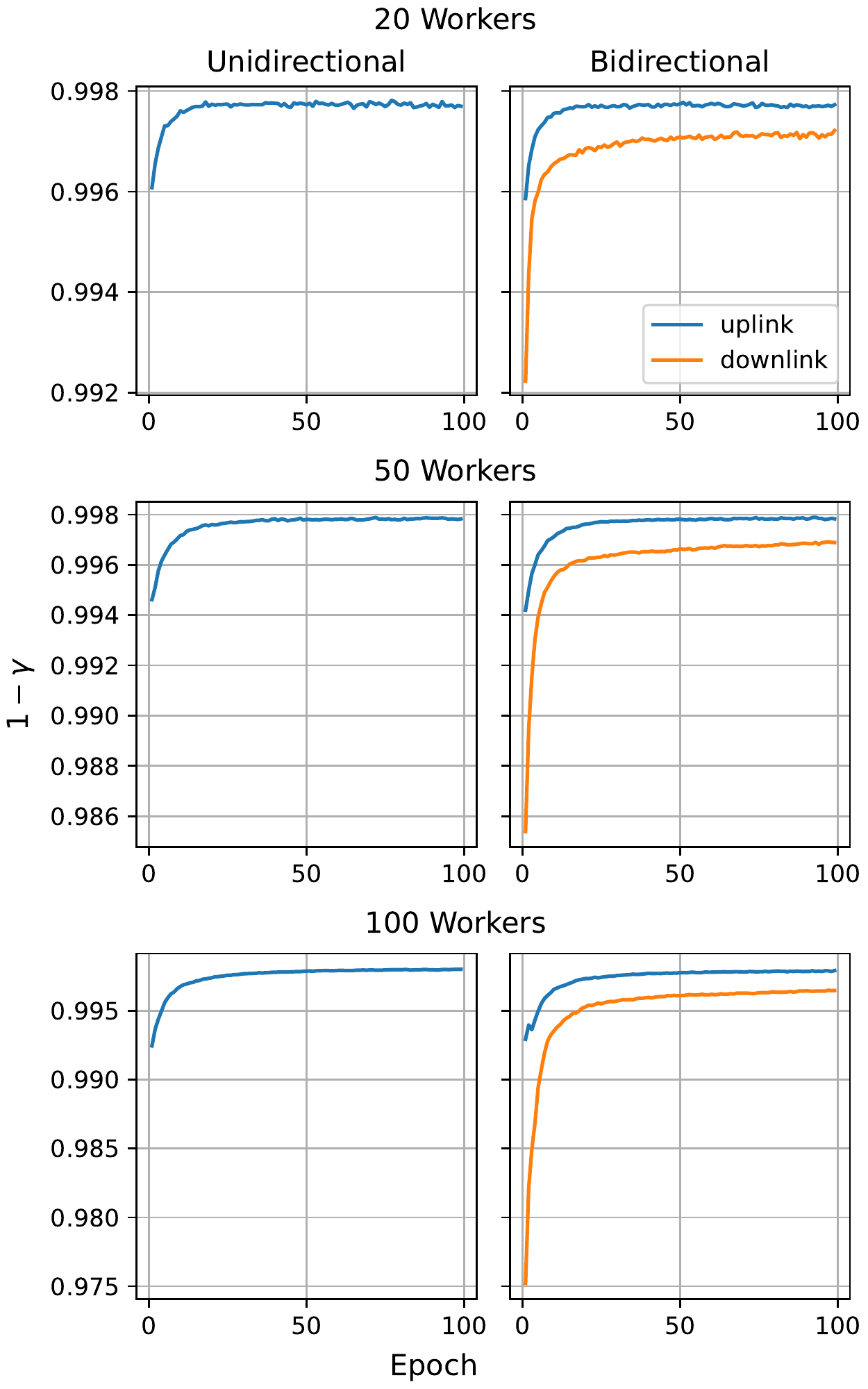}
    \caption{Largest $1 - \gamma$ value in each epoch of a MLP model trained with unidirectional and bidirectional top$_K$ SGD on Fashion MNIST dataset.}
    \label{fig:fmnist_mlp_delta}
\end{figure}

\begin{figure}[ht]
    \centering
    \includegraphics[width=0.6\textwidth]{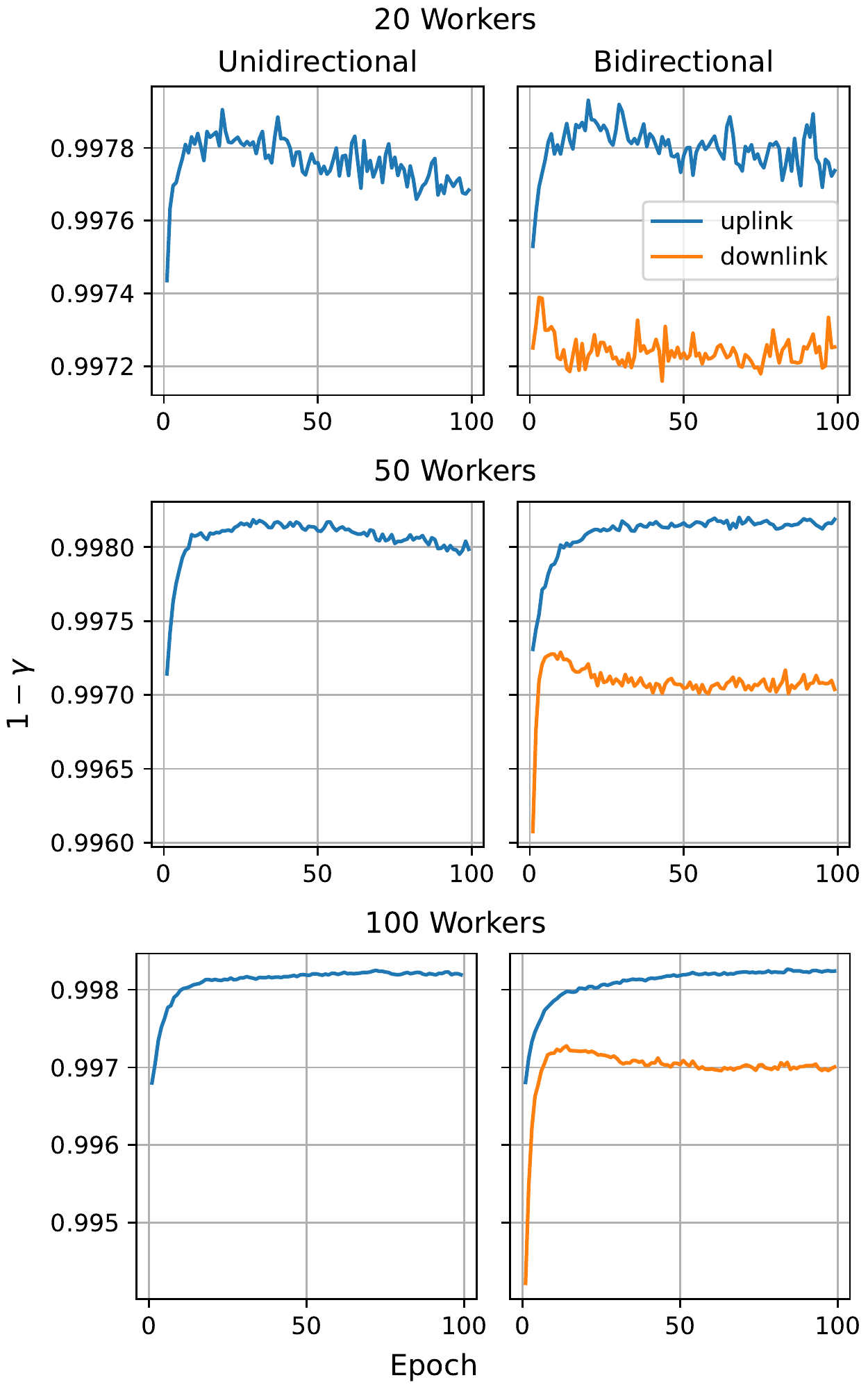}
    \caption{Largest $1 - \gamma$ value in each epoch of a CNN model trained with unidirectional and bidirectional top$_K$ SGD on a Fashion MNIST dataset.}
\end{figure}

\begin{figure}[ht]
    \centering
    \includegraphics[width=0.6\textwidth]{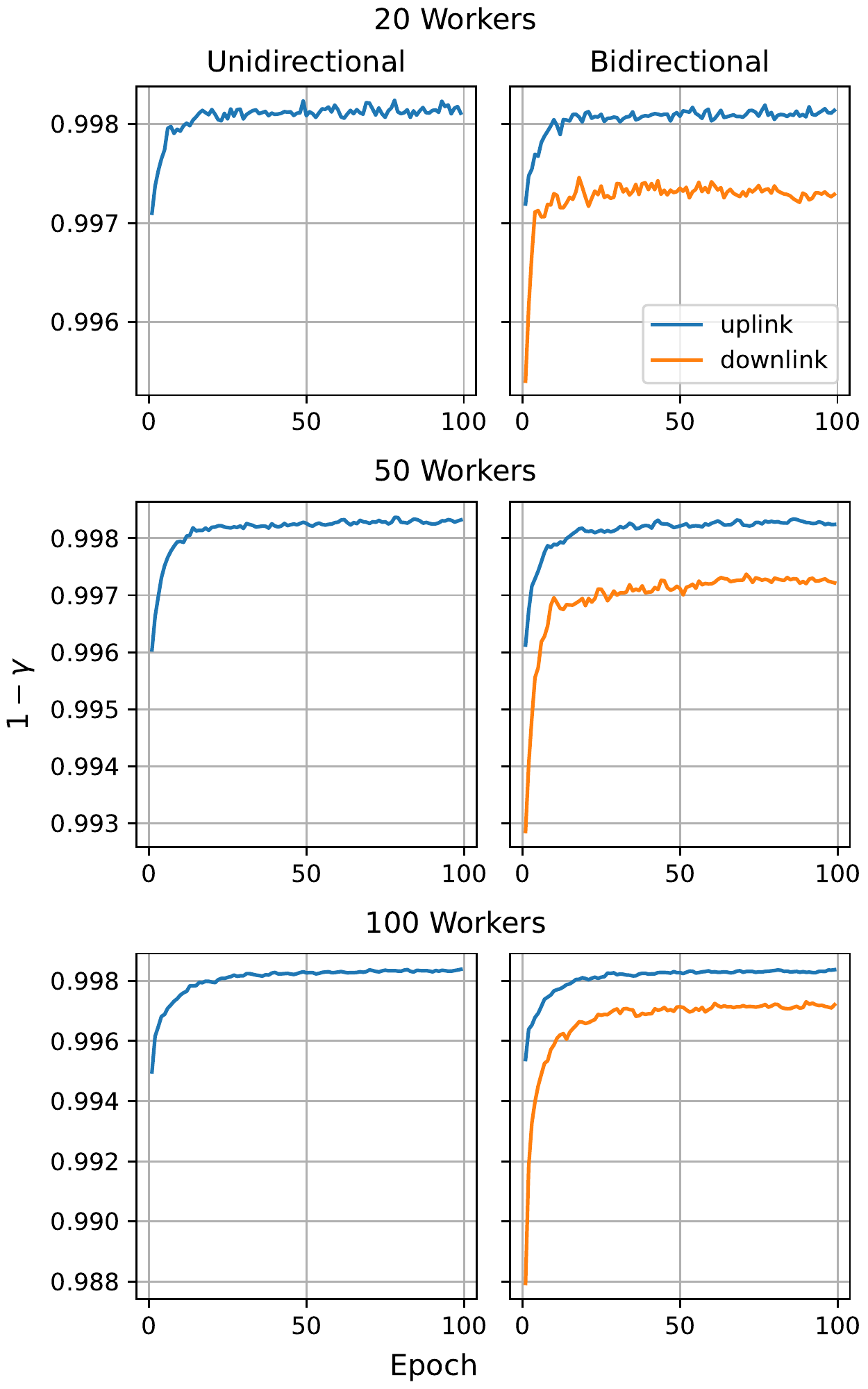}
    \caption{Largest $1 - \gamma$ value in each epoch of a MLP model trained with unidirectional and bidirectional top$_K$ SGD on a MNIST dataset.}
\end{figure}

\begin{figure}[ht]
    \centering
    \includegraphics[width=0.6\textwidth]{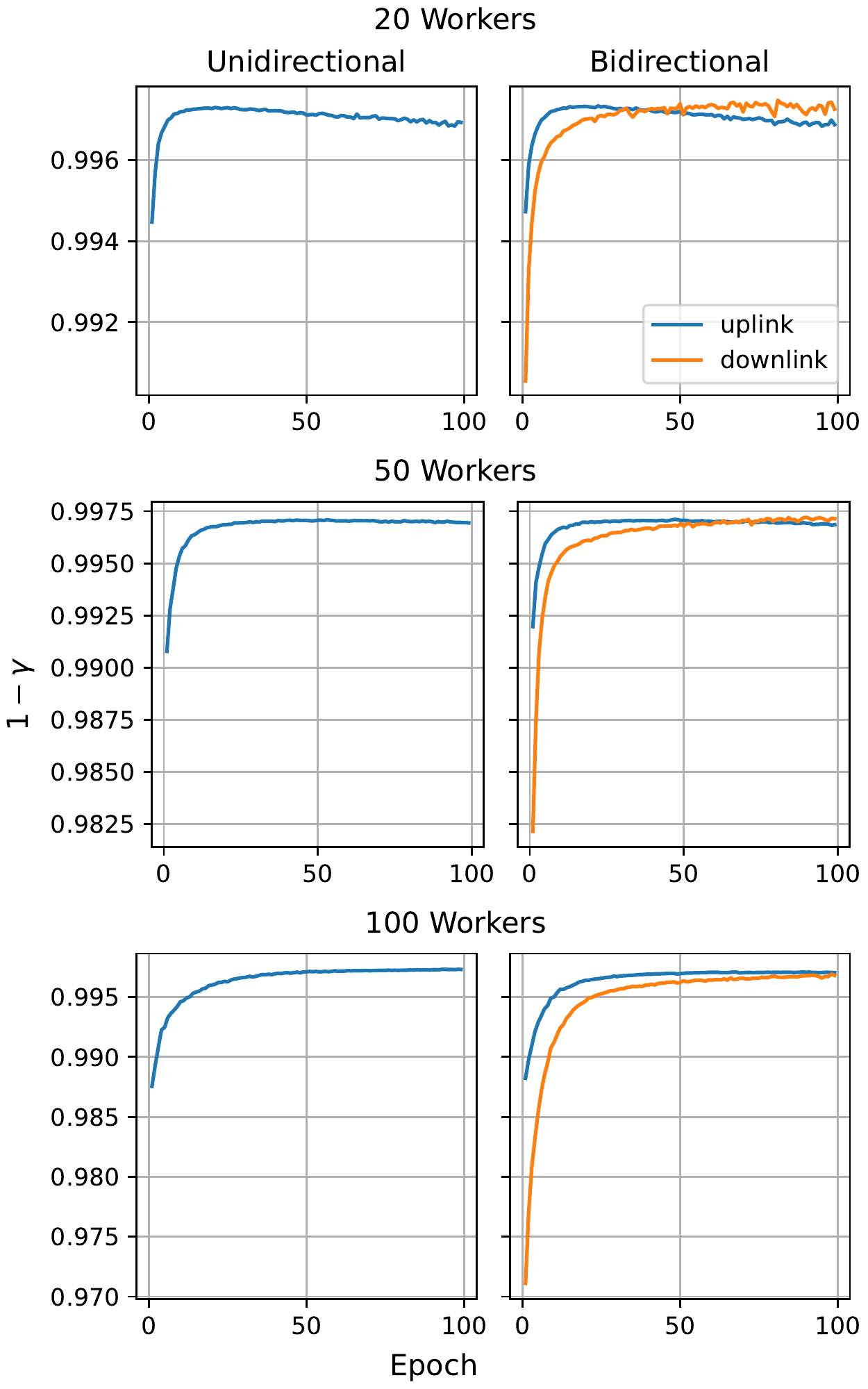}
    \caption{Largest $1 - \gamma$ value in each epoch of a CNN model trained with unidirectional and bidirectional top$_K$ SGD on a MNIST dataset.}
    \label{fig:mnist_cnn_delta}
\end{figure}

\FloatBarrier

\subsubsection{Fraction of Non-Zero Indices in Gradient After Server Aggregation}
\label{appendix:non-zero}
\begin{figure}[ht]
    \centering
    \includegraphics[width=1\textwidth]{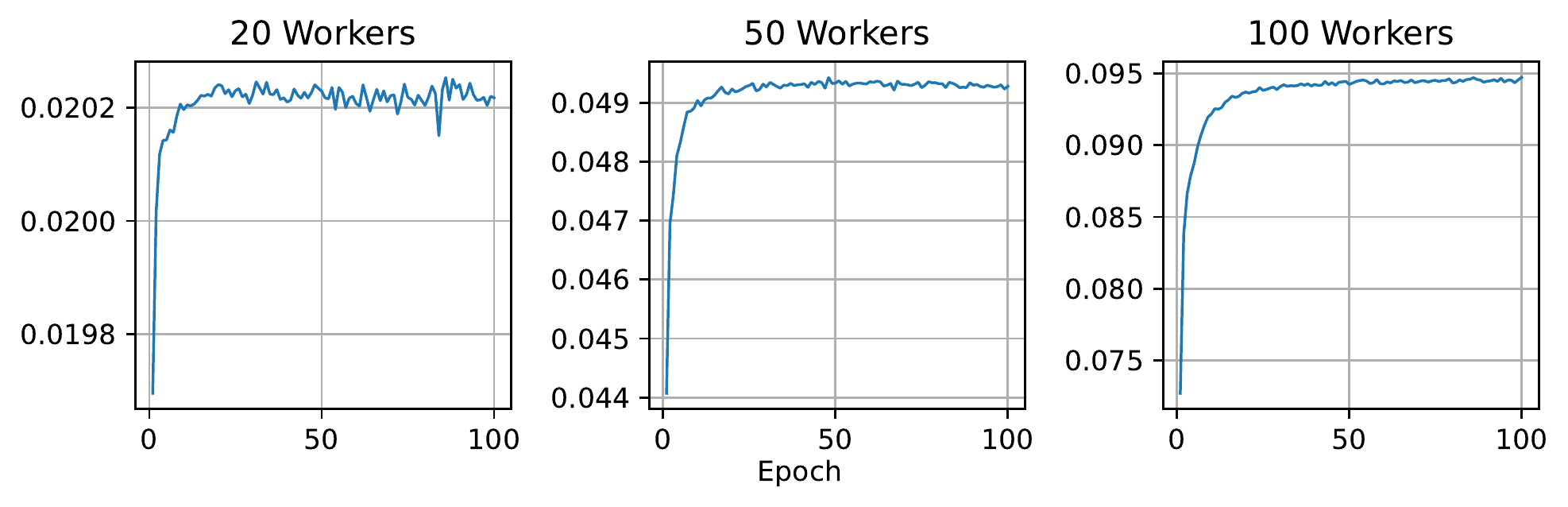}
    \caption{Percent of non-zero indices after aggregating sparse gradients from workers. Trained on a CNN Model using Fashion MNIST dataset. $K_\text{uplink} = K_\text{downlink} \approx  0.001d$.}
    \label{fig:fmnist_cnn_compression_upstream}
\end{figure}

\begin{figure}[ht]
    \centering
    \includegraphics[width=1\textwidth]{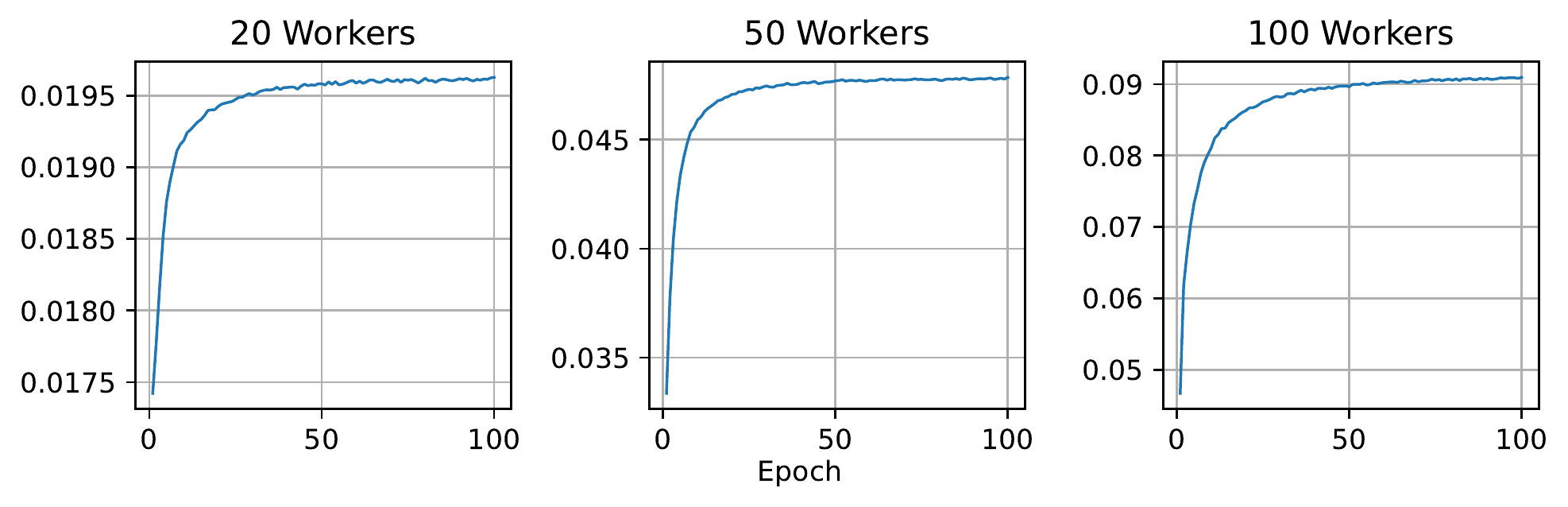}
    \caption{Percent of non-zero indices after aggregating sparse gradients from workers. Trained on a MLP Model using MNIST dataset. $K_\text{uplink} = K_\text{downlink} \approx  0.001d$.}
    \label{fig:mnist_mlp_compression_upstream}
\end{figure}

\begin{figure}[ht]
    \centering
    \includegraphics[width=1\textwidth]{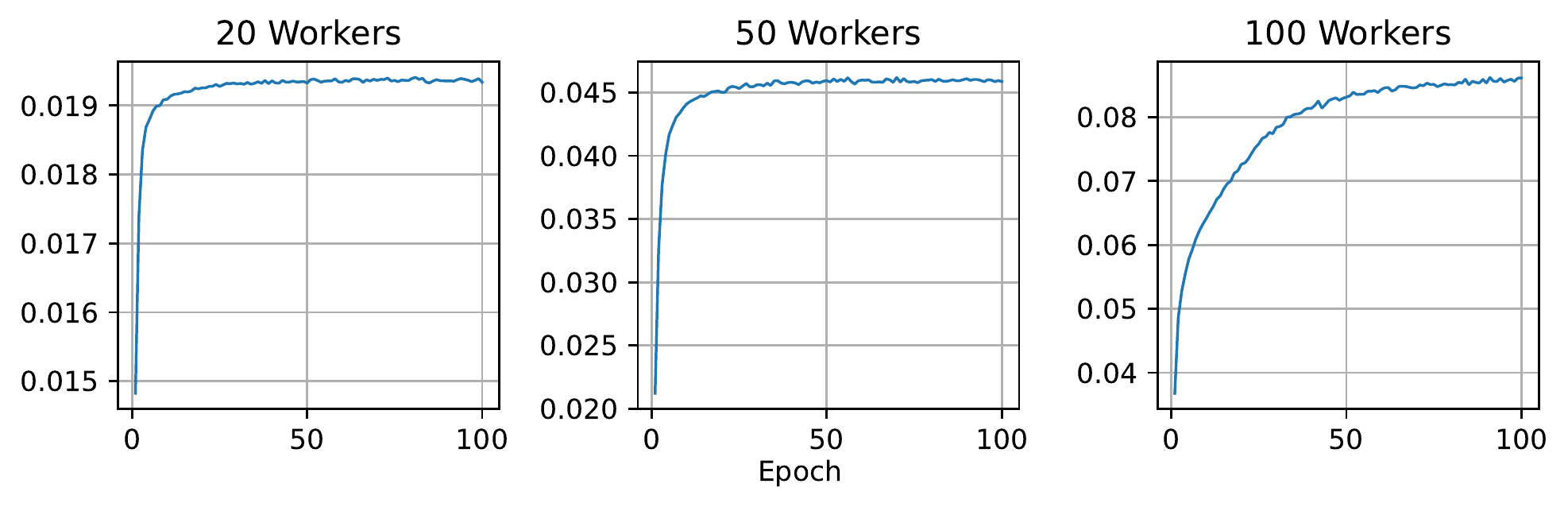}
    \caption{Fraction of non-zero indices after aggregating sparse gradients from workers. Trained on a CNN Model using MNIST dataset. $K_\text{uplink} = K_\text{downlink} \approx  0.001d$.}
    \label{fig:mnist_cnn_compression_upstream}
\end{figure}
\end{document}